
\documentclass[runningheads]{llncs}
\usepackage{graphicx}

\usepackage{tikz}
\usepackage{comment}
\usepackage{amsmath,amssymb} 
\usepackage{color}

\usepackage{algorithm}  
\usepackage{algorithmicx} 
\usepackage{algpseudocode} 
\usepackage{amsthm}
\usepackage{diagbox}
\usepackage{cite}
\usepackage{float}
\usepackage{multirow}
\usepackage{enumitem}
\usepackage{setspace}
\usepackage{bbding}
\usepackage{booktabs}
\usepackage{graphicx}

\usepackage[breaklinks=true,
bookmarks=ture,citecolor=black,    
colorlinks=true,
linkcolor=black,
filecolor=magenta,      
urlcolor=cyan,]{hyperref}

\usepackage[capitalize]{cleveref}
\crefname{section}{Sec.}{Secs.}
\Crefname{section}{Section}{Sections}
\Crefname{table}{Table}{Tables}
\crefname{table}{Tab.}{Tabs.}

\begin{document}
\newtheorem{assumption}{Assumption}

\newtheorem*{theorem2}{Theorem}
\newtheorem*{lemma2}{Lemma}
\newtheorem*{corollary2}{Corollary}
\newtheorem*{assumption2}{Assumption}
\newtheorem*{definition2}{Definition}
\allowdisplaybreaks

\pagestyle{headings}
\mainmatter

\title{Balancing Stability and Plasticity through Advanced Null Space in Continual Learning} 

\titlerunning{Advanced Null Space}
%
\author{Yajing Kong\inst{1} \and
Liu Liu \inst{1} \and
Zhen Wang \inst{1} \and Dacheng Tao \inst{1, 2}}
\authorrunning{Y. Kong et al.}
%
\institute{The University of Sydney, Darlington, NSW 2008, Australia\\
\and JD Explore Academy, Beijing, China\\
\email{\{ykon9947, liuliu1, zwan4121\}@sydney.edu.au},   \email{dacheng.tao@gmail.com}}
\maketitle

\begin{abstract}
Continual learning is a learning paradigm that learns tasks sequentially with resources constraints, in which the key challenge is stability-plasticity dilemma, i.e., it is uneasy to simultaneously have the stability to prevent catastrophic forgetting of old tasks and the plasticity to learn new tasks well. In this paper, we propose a new continual learning approach, Advanced Null Space (AdNS), to balance the  stability and plasticity without storing any old data of previous tasks. Specifically, to obtain better stability, AdNS makes use of low-rank approximation to obtain a novel null space and projects the gradient onto the null space to prevent the interference on the past tasks. To control the generation of the null space, we introduce a non-uniform constraint strength to further reduce forgetting. Furthermore, we present a simple but effective method, intra-task distillation, to improve the performance of the current task. Finally, we theoretically find that null space plays a key role in plasticity and stability, respectively. Experimental results show that the proposed method can achieve better performance compared to state-of-the-art continual learning approaches. 
\keywords{Continual learning \and catastrophic forgetting \and 
null space}
\end{abstract}

\section{Introduction}
Humans have excellent abilities in learning new knowledge while maintaining the knowledge learned from past experience through their lifelong time.
Continual learning aims at developing algorithms for neural networks with the same capabilities from a stream of data \cite{parisi2019continual,tani2016exploring}.
However, although deep neural networks have made impressive achievements across various domains, they easily suffer performance degradation on the previous tasks when applied to sequential tasks without any access to historical data. The problem, referred to as catastrophic forgetting, is a key challenge in continual learning
  \cite{parisi2019continual,tani2016exploring,lee2017overcoming,mccloskey1989catastrophic,rebuffi2017icarl,kemker2018measuring}.

This problem is closely related to the stability-plasticity dilemma \cite{mirzadeh2020dropout,mirzadeh2020understanding}. Specifically, when learning in a sequential fashion, the network is required to have the plasticity to integrate new knowledge well and the stability to prevent the forgetting of previous tasks. However, the stability-plasticity dilemma indicates that it is hard to simultaneously have high plasticity and high stability. To relieve the dilemma, a growing body of continual learning methods are introduced. These method can be roughly divided into four categories: architecture-based methods expand the network or allocate new neurons for new tasks \cite{zhou2012online,rusu2016progressive,jerfel2019reconciling,mallya2018packnet}; replayed-based methods interleave old data with current data by storing historical data in a buffer or generating virtual old data \cite{rolnick2019experience,isele2018selective,buzzega2020dark,chaudhry2019continual,riemer2018learning};  regularization-based methods penalize the update of important parameters of previous tasks \cite{kirkpatrick2017overcoming,zenke2017continual,aljundi2018memory,lee2017overcoming}; algorithm-based methods modify the update rule of parameters to prevent the interference across tasks \cite{Wang_2021_CVPR,saha2021gradient,chaudhry2018efficient,lopez2017gradient,tang2021layerwise}. 

For algorithm-based methods, one of the classical approaches is to project the gradient onto the approximation null space of all previous tasks, in which the gradient has little interference on the performance of previous tasks \cite{Wang_2021_CVPR,saha2021gradient,zeng2019continual}. 
However, despite the impressive performance achieved by these methods, there are still some challenges that impede the null space methods to achieve satisfactory stability-plasticity trade-off. 
First, the null space methods are based on the finding that the model modifies the parameters in the exact null space of previous tasks.
However, due to the approximation of null space, the model will occur in interference on the previous tasks. 
Moreover, the interference would affect the subsequent approximation of null space of past tasks, thus leading to more information deficiency of null space of previous tasks. 
Therefore, the stability of the model will be unsatisfactory.
Second, the model update is based on the gradient projection on the null space of previous task, preventing the model from learning the current task well,
i.e., resulting in worse plasticity. 
    \begin{figure*}[t]
\centering
   \includegraphics[width=1.0\linewidth]{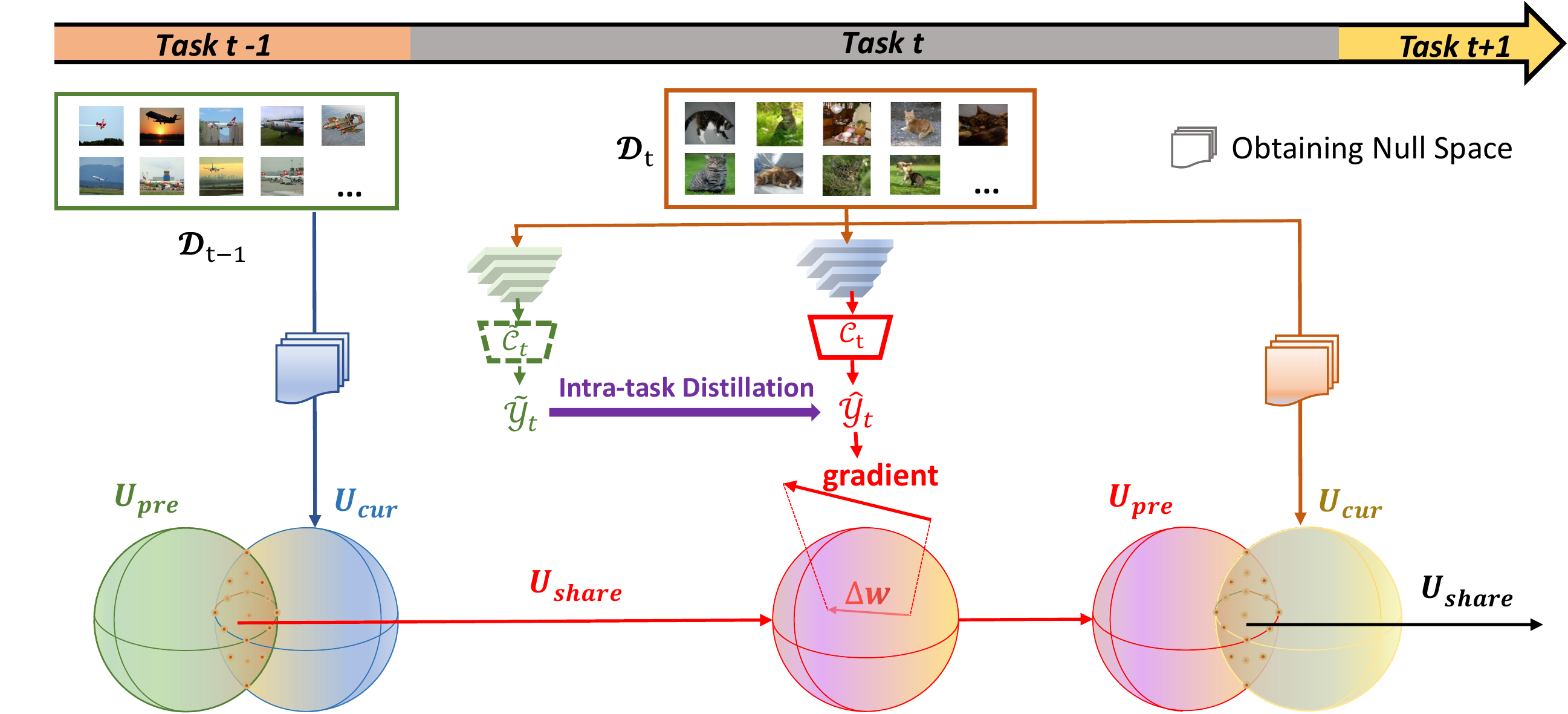}
   \caption{The pipeline of the proposed method. Left: At the task $\mathcal{T}_{t-1}$, we obtain the shared low-rank null space  $\mathbf{U}_{\text {share}}$ based on $\mathbf{U}_{\text {pre}}$  and $\mathbf{U}_{\text {cur}}$. Middle: We project the gradient at each layer onto the shared low-rank null space. Right: $\mathbf{U}_{\text {share}}$ is used for the next task $\mathcal{T}_{t+1}$ as $\mathbf{U}_{\text{pre}}$.
   Note at the task $\mathcal{T}_{t}$ $(t\textgreater 1)$, we conduct Intra-task Distillation between $\tilde{\mathcal{Y}}_t$ and $\hat{\mathcal{Y}}_t$. 
   }
\label{LowRank:procedure}
\end{figure*}

 To address the above challenges,
  we propose a new algorithm-based continual learning approach, Advanced Null Space (AdNS), to achieve a good balance between stability and plasticity. Specifically, to alleviate the impact of approximation on the stability, 
  AdNS makes use of the low-rank approximation to extract the shared null space between the previous null space and the current candidate null space. Unlike existing works that only focus on the current candidate null space \cite{saha2021gradient, zeng2019continual}, AdNS projects the gradient onto the shared null space,  which contains the core spaces between null spaces, and thus could mitigate the information deficiency of the previous null spaces and reduce forgetting.
   Moreover, we present a constraint to control the approximation of null space and propose non-uniform constraint strength, which 
monotonically decreases with the number of tasks increasing, to further relieve the forgetting.     
 What's more, to improve the performance of  the current task, we leverage a simple method, intra-task distillation, to self-distill the knowledge of the current task.
 The procedure of the proposed method is shown in Fig. \ref{LowRank:procedure}.

  Finally, although various algorithms about null space have been proposed, few efforts were spent on the theoretical foundations. Therefore, in this paper, we theoretically analyze the impact of  null space and present two theorems to prove that  null space plays a key role in stability and plasticity.  
 The theoretical finding indicates the inherent properties of the stability-plasticity dilemma, in which it is hard to have high plasticity and high stability simultaneously.
 To summarize, our contributions are threefolds:
\begin{itemize}
[topsep=0pt,itemsep=-1ex,partopsep=1ex,parsep=1ex, leftmargin=0.4cm]
    \item 
    To address the stability-plasticity dilemma, we propose a new algorithm-based continual learning approach, AdNS, which projects the gradient into the shared null space under non-uniform constraint strength to reduce forgetting, and uses intra-task distillation to improve the tasks' performance.
    \item We present two theorems from the perspective of stability and plasticity, which show that the null space plays a key role in balancing the stability-plasticity dilemma.
    Specifically, the larger the dimension of the null space, the better the plasticity, the worse the stability.
    \item 
    We validate the proposed method in several benchmarks, and the empirical results show that the proposed method can outperform related state-of-the-art continual learning methods.
\end{itemize}

\section{Related Work}

\noindent\textbf{Algorithms-based methods} design the update rule to decrease the interference of parameter update on the performance of old tasks \cite{lopez2017gradient, chaudhry2018efficient, Wang_2021_CVPR, saha2021gradient, tang2021layerwise, chaudhry2020continual, trgp, aop}. For example,
GEM \cite{lopez2017gradient} and A-GEM \cite{chaudhry2018efficient} used the historical samples to compute the gradients of old tasks and proposed inequality constraints of gradients to avoid the increase of losses of past tasks. \textit{Arslan et al.} \cite{chaudhry2020continual} manually divided a random orthonormal space into several subspaces and allocated these subspaces one-to-one to each task. However, these methods require storing data of previous tasks. 
In contrast,
GPM \cite{saha2021gradient} stored the bases of core gradient space and modified the parameters
in the direction orthogonal to the core space. 
OWM \cite{zeng2019continual} modified the parameters in the direction orthogonal to the input space of previous tasks. 
 Adam-NSCL stored uncentered feature covariance and used it to compute the null space \cite{Wang_2021_CVPR}. 
Our work is closely related to Adam-NSCL.
However, 
unlike Adam-NSCL that only considers the current candidate null space,
we project the gradient onto the shared null space between null spaces,
which could relieve the information deficiency of null space of previous tasks,
resulting in less forgetting.
Moreover, 
our method does not rely on any data of previous tasks and only needs to update the null space in one shot manner at the end of each task.

\noindent\textbf{Regularization-based methods} can be divided into two classes: one is to distill knowledge from the previous model which is trained on the previous tasks \cite{li2017learning, jung2016less, zhang2020class, rannen2017encoder,jing2021amalgamating, ICML20_DSL}; another is to explicitly use a regularizer to penalize the update of important parameters of previous tasks, preventing the model from deviating too much from the previous one
to avoid forgetting
\cite{lee2020continual, lee2017overcoming, park2019continual, nguyen2017variational,zenke2017continual}.
For example, for the first class, LwF \cite{li2017learning} distills the knowledge by using the previous model outputs as soft labels
and penalizing the distillation term between the current and recorded output. 
For the latter one, EWC \cite{kirkpatrick2017overcoming} used the diagonal of the Fisher information matrix as the importance.

\noindent \textbf{Other Approaches.} Architecture-based methods allocate different parameters or add new parameters for the new task, while sharing parameters across tasks to reduce the interference of previous tasks \cite{aljundi2017expert,li2019learn, zhou2012online, rusu2016progressive,wu2020firefly,yoon2019scalable,rosenfeld2018incremental, serra2018overcoming, mallya2018packnet, mallya2018piggyback,masse2018alleviating}.
However, such methods may lead to a cumbersome and complex network if new tasks continually arrive. 
Replayed-based methods leverage episodic memory to store representative history data or generate virtual data via a generative model, and replay these samples with current data \cite{rolnick2019experience, isele2018selective,  chaudhry2019continual, riemer2018learning,lopez2017gradient, chaudhry2018efficient, shin2017continual, rao2019continual, aljundi2019online, CVPR22_LVT, AAAI22_CL}. 
However, replayed-based methods may bring some problems since storing old data will result in data imbalance, and the generative model would be large and expensive
if it synthesizes the historical data reasonably.

\section{Preliminaries}
\subsection{Settings and Notations}
In this part, we present the settings and notations. We consider  a sequence of tasks $\mathcal{T}_{t}$, $t \in \{1, ..., T\}$, where $T$ is the total number of tasks.
Let $\mathcal{D}_{t} = \{\mathcal{X}_{t}, \mathcal{Y}_{t}\}$
be the dataset of task $\mathcal{T}_{t}$, where $\mathcal{X}_{t}$ and $\mathcal{Y}_{t}$ are the corresponding inputs set and label set.
In continual learning, the model is trained on these datasets sequentially.
Let $\mathbf{w} = \{w^1,..., w^{L}\}$ be the parameters of $L$-layer neural network, where $w^l$ is the parameter vector of the $l$-th layer, $l \in \{1,..., L\}$. 
\textcolor{black}{Let $\hat L_{t}(\mathbf{w})$ be the empirical loss of task $\mathcal{T}_t$ with parameters $\mathbf{w}$.} Define $\tilde{\mathbf{w}}_t$ as the convergence parameters after the model has been trained on the task $\mathcal{T}_t$. Rank($\cdot$) denotes the rank of a matrix, and $[\cdot, \cdot]$ denotes the concatenation of vectors or matrices. $\|\cdot\|_{F}$ denotes Frobenius norm,   $\|\cdot\|_{1}$ denotes L$_1$ norm, and $\|\cdot\|_{2}$ denotes L$_2$ norm.

\subsection{Null Space}

Let $\Delta w^l$  be the parameter update for $l$-th layer for the current step.
If $\Delta w^l$ lies in the null space of previous tasks at each training step for the $l$-th layer, $l \in \{1, ..., L\}$,  then the stability can be guaranteed, which is illustrated by the following lemma. 

\begin{lemma}
 \cite{Wang_2021_CVPR} Define $X_{p,p}^l$ as the input feature of $l$-th layer when the network is fed with data $\mathcal{X}_p$ after training on the task $\mathcal{T}_p$. Let $\mathcal{N}(\tilde{\mathbf{w}}_t; \mathcal{X}_p)$ be the output of the $L$-layer network with parameters $\tilde{\mathbf{w}}_t$ when the network is fed with data $\mathcal{X}_p$.
If at each training
step of task $\mathcal{T}_t$, $\Delta w^l$ lies in the null space of $X_{t-1}^l = [X_{1,1}^l, ..., X_{t-1,t-1}^l]$ , i.e.,
\begin{align}
\label{LowRank:null}
    X_{t-1}^l\Delta w^l = \mathbf{0},  \quad  l = 1, ..., L,
\end{align}
 then we have $\mathcal{N}(\tilde{\mathbf{w}}_t; \mathcal{X}_p) = \mathcal{N}(\tilde{\mathbf{w}}_p; \mathcal{X}_p )$ for all $p \in \{1,..., t -1 \}$.
 \label{LowRank:Lemma1}
\end{lemma}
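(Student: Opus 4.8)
The plan is to show that the null-space condition forces every layer's \emph{pre-activation} on the old data $\mathcal{X}_p$ to stay frozen while the network trains on $\mathcal{T}_t$, and then to propagate this invariance from the input layer up to the output. The central observation is that the pre-activation of layer $l$ is affine in the weight $w^l$, and that $X_{t-1}^l$ is constructed precisely so that the product $X_{t-1}^l\,\Delta w^l$ collects, row by row, the change in pre-activation that the step $\Delta w^l$ induces on the stored feature vectors. Hence \eqref{LowRank:null} says that this change vanishes on every old feature block, in particular on the block $X_{p,p}^l$ for each $p \le t-1$.

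First I would set up the layerwise induction for a fixed $p$. Writing $\phi^l$ for the output of layer $l$ when $\mathcal{X}_p$ is fed through the network, the base case $l=1$ is immediate: the input to the first layer is the raw data $\mathcal{X}_p$, which is independent of the parameters, so $X_{p,p}^1$ is unchanged throughout $\mathcal{T}_t$; since $X_{p,p}^1\,\Delta w^1 = \mathbf{0}$ at every step, the accumulated update $\tilde{w}^1_t - \tilde{w}^1_{t-1}$ also lies in the null space (a linear subspace is closed under summation of the per-step updates), so the first-layer pre-activations, and thus $\phi^1$ after the fixed nonlinearity, do not move. For the inductive step I would assume $\phi^{l-1}$ is unchanged; then the input to layer $l$ is exactly the frozen feature $X_{p,p}^l$, and the orthogonality $X_{p,p}^l\,\Delta w^l = \mathbf{0}$ again gives unchanged layer-$l$ pre-activations, hence unchanged $\phi^l$. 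Carrying this up to $l=L$ yields $\mathcal{N}(\tilde{\mathbf{w}}_t;\mathcal{X}_p) = \mathcal{N}(\tilde{\mathbf{w}}_{t-1};\mathcal{X}_p)$.

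The remaining point, and the one I expect to be the main obstacle, is the consistency of the reference features: the block $X_{p,p}^l$ is defined using the parameters $\tilde{\mathbf{w}}_p$ obtained right after $\mathcal{T}_p$, whereas during $\mathcal{T}_t$ the live inputs to layer $l$ are those produced by $\tilde{\mathbf{w}}_{t-1}$. These coincide only if the outputs on $\mathcal{X}_p$ were already preserved across the intervening tasks $p{+}1,\dots,t{-}1$. I would close this gap by an outer induction on $t$: the induction hypothesis (the lemma for task $t-1$) gives $\mathcal{N}(\tilde{\mathbf{w}}_{t-1};\mathcal{X}_p) = \mathcal{N}(\tilde{\mathbf{w}}_p;\mathcal{X}_p)$ and, by the same layerwise argument, the equality of all intermediate features, so that the frozen $X_{p,p}^l$ is indeed the feature actually seen at the start of $\mathcal{T}_t$. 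Chaining the two equalities yields $\mathcal{N}(\tilde{\mathbf{w}}_t;\mathcal{X}_p) = \mathcal{N}(\tilde{\mathbf{w}}_p;\mathcal{X}_p)$ for every $p \le t-1$, which is the claim. Beyond routine bookkeeping, the only facts used are the affine dependence of pre-activations on the weights and the closure of the null space under summing the per-step updates.
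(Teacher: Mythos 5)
Your proof is correct and follows essentially the same route as the proof of this lemma in its cited source (Adam-NSCL, \cite{Wang_2021_CVPR}); the present paper does not reprove it, and the standard argument is exactly your layerwise induction, using that pre-activations are linear in $w^l$, that $X_{p,p}^l$ is a block of $X_{t-1}^l$, and that the null space is closed under summing the per-step updates. Your observation that the constraint must hold at every task $2,\dots,t$ (not literally only at task $\mathcal{T}_t$, as the statement reads) so that the frozen features $X_{p,p}^l$ coincide with the live inputs at the start of $\mathcal{T}_t$, resolved by the outer induction on $t$ with the strengthened hypothesis that all intermediate features on $\mathcal{X}_p$ are preserved, is precisely the intended reading of the sequential training procedure.
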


According to Lemma \ref{LowRank:Lemma1}, if the parameters are modified in the null space of $X_{t-1}^l$, 
then the training loss of previous tasks will be retained and the forgetting can be avoid.
Nevertheless, it is unrealistic to expect the existence of the null space, thus previous works \cite{Wang_2021_CVPR, zeng2019continual, saha2021gradient} use the approximation null space instead. However, the approximation will cause that Eq.(\ref{LowRank:null}) no longer holds and result in the occurrence of interference on previous tasks. Moreover, the interference would affect the subsequent approximation of null space of past tasks, leading to more performance degradation on past tasks, i.e., catastrophic forgetting.

\section{Methodology}
In this section, we propose a new continual learning method, Advanced Null Space (AdNS), involving shared low-rank null space (Section \ref{LowRank:sectionLowRank}), non-uniform constraint strength (Section \ref{LowRank:Strength}), and intra-task distillation (Section \ref{LowRank:SelfDistillation}), to balance the stability and plasticity.  The procedure of AdNS is shown in Fig. \ref{LowRank:procedure} and the algorithm is shown in Algorithm \ref{LowRank:Algorithm}.  
\floatname{algorithm}{Algorithm}  
\renewcommand{\algorithmicrequire}{\textbf{Input:}}  
\renewcommand{\algorithmicensure}{\textbf{Output:}} 
\begin{algorithm}[t]
\caption{Advanced Null Space (AdNS)}  
\begin{algorithmic}
\Require Network $\mathcal{N}$ with parameters $\mathbf{w}$
\Ensure Target network $\mathcal{N}$
\For{$t=1,2,..,T$}
\If{$t = 1$}
\While{not converged}
\State  Update the gradient according to  $ \hat L_1(\mathbf{w})$
\EndWhile
\State $\tilde X_{1}^l \leftarrow  (X_{1,1}^l)^{\top}X_{1,1}^l$
\State $\mathbf{U}^{l} \leftarrow $  Null space of $\tilde X_{1}^l$ based on 
(\ref{LowRank:constraint})
\State $\mathbf{U}_{\text{pre}}^{l} \leftarrow \mathbf{U}^{l}$ 
\State Break \hfill $\triangleright$ Return to the next task
\EndIf
\State $\tilde{\mathcal{Y}}_{t} \leftarrow \mathcal{N}(\mathbf{w};\mathcal{X}_t)$ after updating the classifier  $\tilde{\mathcal{C}}_t$
\While{not converged}
\State Update the gradient in the null space  at each layer based on  $\mathcal{U} = \{\mathbf{U}^1, ..., \mathbf{U}^L\}$ according to ~\eqref{LowRank:OptimProblem}
\EndWhile
\State $\tilde X_{t}^l \leftarrow \tilde X_{t-1}^l + (X_{t,t}^l)^{\top}X_{t,t}^l$
\State $\mathbf{U}_{\text{cur}}^{l} \leftarrow $  Null space of $\tilde X_{t}^l$ based on 
(\ref{LowRank:constraint})
\State $\mathbf{\tilde U}^l = [\mathbf{U}_{\text{pre}}^{l}, \mathbf{U}_{\text{cur}}^{l}]$ $,l = 1,..., L$
\State $\mathbf{U}^{l} \leftarrow$ The shared low-rank null space obtained in (\text{P}\ref{LowRank:OptimizationProblem})
\State $\mathbf{U}_{\text{pre}}^{l} \leftarrow \mathbf{U}^{l}$ 
\EndFor
\State \Return $\mathcal{N}$
\end{algorithmic}
\label{LowRank:Algorithm}
\end{algorithm}
\subsection{Shared Low-Rank Null Space}
\label{LowRank:sectionLowRank}
In this part, we introduce a noval null space,
the shared low-rank null space, which extracts the shared null spaces between the previous null space and the current candidate null space based on low-rank approximation.

When training on the current task $\mathcal{T}_t$ ($t \textgreater 1$), the gradient of $l$-th layer is projected onto the null space of previous tasks, which is spanned by the columns of $\mathbf{U}_{\text{pre}}^l$\footnote{We use the matrix whose columns are consisted of the orthonormal basis of the null space to represent null space.},
$l\in \{1,...,L\}$ and 
$ \mathcal{U}_{\text{pre}} = \{\mathbf{U}_{\text{pre}}^1, ..., \mathbf{U}_{\text{pre}}^L\}$  is the set containing the previous null spaces of $L$ layers.
After training the current task, the current candidate null space $\mathcal{U}_{\text{cur}} = \{\mathbf{U}_{\text{cur}}^1, ..., \mathbf{U}_{\text{cur}}^L\}$ could be obtained based on the input features of tasks seen so far. Specifically, we use the uncentered feature covariance, i.e., $\tilde X_{t}^l = (X_{t}^l)^{\top}X_{t}^l$, to compute the current candidate null space \cite{Wang_2021_CVPR}. Such process has moderate memory consumption since the dimension of $\tilde X_{t}^l$ is irrelevant to the size of data. It can be easily proved that the null space of $\tilde X_{t}^l$ is equal to $X_{t}^l$. We update the input features of each layer by $\tilde X_{t}^l \leftarrow \tilde X_{t-1}^l + (X_{t,t}^l)^{\top}X_{t,t}^l$, where $X_{t,t}^l$ is the input feature of $l$-th layer when the network is fed with data $\mathcal{X}_t$ after training on the task $\mathcal{T}_t$, and then obtain the current candidate null space at the end of task.

To alleviate the impact of approximation on the stability, 
rather than using $\mathcal{U}_{\text{cur}}$, we extract the shared null space 
between $\mathcal{U}_{\text{pre}}$ and $\mathcal{U}_{\text{cur}}$ by solving the problem of low-rank approximation for the concatenation matrix 
$\mathbf{\tilde U}^l = [\mathbf{U}_{\text{pre}}^l,   \mathbf{U}_{\text{cur}}^l]$:
\begin{equation}
    \begin{aligned}
     \text{minimize}_{\mathbf{ \hat U}^l}   \quad \| \mathbf{\tilde U}^l -  \mathbf{ \hat U}^l\|_F \nonumber \quad
     s.t. \quad \text{Rank}(\mathbf{ \hat U}^l) \leq k_{l}, \quad l = 1,...,L,
    \label{LowRank:OptimizationProblem}
    \end{aligned}
    \eqno{(\text{P}2)}
\end{equation}
where $k_{l}$ is the rank of the shared null space of $l$-th layer.
The optimization problem (P\ref{LowRank:OptimizationProblem}) has analytic solutions in terms of the singular value decomposition.
 Because we want to project the gradient onto an orthonormal space, according to the properties of singular value decomposition, the objective matrix $\mathbf{U}^l$, i.e., the shared low-rank null space of $l$-th layer,
can be
constructed by the singular vectors of $\mathbf{\hat U}^l$ rather than using $\mathbf{\hat U}^l$ directly. The implementation details can be found in the Appendix.

The shared low-rank null space, which is the range space of $\mathbf{U}^l$, contains the shared information of the previous null space and the current null space. 
Thus, projecting the gradient onto the shared low-rank space could relieve catastrophic forgetting.
In formal,
we project $g^l$ as the following projection operation:
\begin{align}
    \Delta w^{l} = \mathbf{U}^l(\mathbf{U}^l)^{\top}g^l, \quad l = 1,...,L,
\label{LowRank:projection}
\end{align}
and the parameter is updated by $w^{l} \leftarrow w^{l} - \eta \Delta w^{l}$, where $\eta$ is the learning rate.

Regarding the computing complexity, let the dimension of the feature at $l$-th layer be  $d^l$ and the dimension of $\tilde{ \mathbf{U}}^l$ be $\tilde{k}_l$, where $\tilde{k}_l \textless d^l$.
Then the complexity of computing the current candidate null space $\mathcal{O}((d^l)^3)$ is larger than the complexity  of the low-rank approximation $\mathcal{O}(d^l(\tilde{k}_l)^2)$. 
Therefore, the time consumption of our proposed method is comparable to previous works \cite{Wang_2021_CVPR, saha2021gradient}. The comparisons of running time can be reffed to Appendix.

\subsection{Non-uniform Constraint Strength}
\label{LowRank:Strength}
In continual learning, it is essential to balance the importance of previous tasks and the current task to achieve satisfying performance \cite{lee2020continual, lee2017overcoming, zenke2017continual}. For example, if the model puts too much weight on the current task, it may suffer significant performance degradation on the prior tasks and  vice-versa.
Therefore, to make a trade-off between the importance of previous tasks and current task, i.e., stability and plasticity, we rewrite the constraint $X_{t-1}^l\Delta w^l = \mathbf{0}$ for $l \in \{1,...,L\}$ to $\|X_{t-1}^l\Delta w^l\|_1 \leq \epsilon$, where $\epsilon$ is a factor controlling the strength of constraint. 
Such a constraint is reasonable for the following two reasons: (a)
In practice, it is unrealistic to expect that there exists a null space satisfying 
$X_{t-1}^l\Delta w^l = \mathbf{0}$ for $l \in \{1,...,L\}$. (b) 
Although the constraint $X_{t-1}^l\Delta w^l = \mathbf{0}$ could guarantee that the model would not suffer from catastrophic forgetting, it would result in poor performance of the current task because the constraint is too strict for the parameter update of the current task. 
In contrast, introducing the balance factor $\epsilon$ allows us to make a trade-off between stability and plasticity flexibly.

Moreover, with new tasks occurring, the number of previous tasks increases, resulting in a greater impact of previous tasks on the final performance than the current task \cite{lee2020continual, lee2017overcoming, zenke2017continual}.  Hence, with the growth of observed tasks, it is necessary to pay more attention to previous tasks to relieve the catastrophic forgetting. Therefore,
 we propose non-uniform constraint strength following a common assumption that the importance of previous tasks is related to the number of tasks seen so far.
In particular, the non-uniform constraint strength can be represented as
\begin{align}
   \| X_{t-1}^l\Delta w^l \|_1 \leq \epsilon(t), \quad \text{for} \quad l = 1,...,L,
    \label{LowRank:constraint}
\end{align}
where $\epsilon(t)$ is a function monotonically decreasing with the number of tasks seen so far. With new tasks continually coming, the constraint strength becomes more restrictive, and thus the model pays more attention to preserving the performance of previous tasks, achieving better stability. 

\subsection{Intra-task Distillation}
\label{LowRank:SelfDistillation}
To further address the plasticity-stability dilemma, we leverage knowledge distillation of the current task, called \text{intra-task distillation}, to improve the performance of the current task. 
In particular, as shown in Fig. \ref{LowRank:procedure},
before training the task $\mathcal{T}_t$, we first freeze the backbone, which is learned on the previous $t-1$ tasks sequentially, and only train the classifier $\tilde{\mathcal{C}}_t$ of the task $\mathcal{T}_t$. Then we store the outputs $\tilde{\mathcal{Y}}_t$ for the current task $\mathcal{T}_t$. The frozen backbone and the classifier absorb the information of prior tasks and the current task, respectively.
Therefore, by penalizing the difference between the record outputs $\tilde{\mathcal{Y}}_t$ from the classifier $\tilde{\mathcal{C}}_t$ and the current outputs $\hat{\mathcal{Y}}_t$ from the classifier $\mathcal{C}_t$, intra-task distillation could improve the performance of the current task while preserving the acquired knowledge from previous tasks.

Specifically, we use the modified cross-entropy loss as the distillation loss. When training on the task $\mathcal{T}_t$, the distillation loss can be represented as:
\begin{equation}
    \begin{aligned}
    \hat L_{d}(\tilde{y}_t, \hat y_t) = -\sum_{c=1}^{C_t} \tilde{y}_t'^{(c)}\text{log}\hat y_t'^{(c)}, 
\end{aligned}
\label{lowrank:DistillationLoss}
\end{equation}
where $\tilde{y}_t'^{(c)} = \frac{\text{exp}(\tilde{y}_t^{(c)} /\tau)}{\sum_i \text{exp}(\tilde{y}_t^{(i)} /\tau)}, \hat y_t'^{(c)} = \frac{\text{exp}(\hat y_t^{(c)} /\tau)}{\sum_i \text{exp}(\hat y_t^{(i)}/\tau)}$, $C_t$ is the number of classes of task $\mathcal{T}_t$, $\tau$ is the temperature factor; $\tilde{y}_t^{(c)}$ and $\hat y_t^{(c)}$ are the recorded and current outputs of a sample $x_t$ in task $\mathcal{T}_t$, respectively. We set $\tau = 2$ by default.

In summary, when training on the task $\mathcal{T}_t$ $(t \textgreater 1)$, the optimization problem including (P\ref{LowRank:OptimizationProblem}) and (\ref{LowRank:projection})-(\ref{lowrank:DistillationLoss}), which is represented as:
\begin{equation}
    \begin{aligned}
    \mathop{\text{minimize}} \limits_{\mathbf{w}}  & \quad \hat L_t(\mathbf{w})  + \beta \hat L_d(\tilde{\mathcal{Y}}_t, \hat{\mathcal{Y}}_t),   \\
      s.t. &\quad   \text{Rank}( h(X_{t-1}^l)) \leq k_{l}, 
     \|X_{t-1}^l\Delta w^l\|_1 \leq \epsilon(t), \quad l = 1, ..., L,
\label{LowRank:OptimProblem}
    \end{aligned}
\end{equation}
where $\hat L_t(\mathbf{w})$ is the cross-entropy loss for the current task and $\beta$ is a coefficient that balances the importance between the cross-entropy loss and the distillation loss $\hat L_d(\tilde{\mathcal{Y}}_t, \hat{\mathcal{Y}}_t)$; $\mathbf{w}$ is the network parameters and $\epsilon(t)$ is the constraint strength which impacts the trade-off between the stability and plasticity;
$h(X)$ denotes $X$ is mapped to its approximation null space which satisfies the constraint;
$k_l$ is used to control the rank of the matrix of $l$-th layer.

\section{Analysis}
\label{LowRank:Analysis}
In this section, we present two theorems to theoretically prove that the null space plays a key role in the  stability-plasticity dilemma. Theorem \ref{Null_space:convergence_rate} is in terms of plasticity and Theorem \ref{Null_space:forgetting} is in terms of stability. The proof can be found in Appendix.
Before introducing the two theorems, we first present three assumptions. 
Comparable assumptions are also made in existing studies on continual learning \cite{yin2020optimization}.
\begin{assumption}
\label{LowRank:Assumption:sigmoid-distance}
     $\hat L_t(\mathbf{w})$ is $L_f$-smooth, i.e.,
        $\hat L_t(\mathbf{w})\leq \hat L_t(\mathbf{v})+\langle \nabla \hat L_t(\mathbf{v}),\mathbf{w}-\mathbf{v}\rangle+\frac{L_f}{2}\|\mathbf{w}-\mathbf{v}\|_2^2$, $t\in \{1,...,T\}$, for any $\mathbf{v},\mathbf{w}\in \mathbb{R}^d$.
\end{assumption} 
\begin{assumption}
\label{LowRank:Assumption:sameT}
     For each task $\mathcal{T}_t$, $t\in \{1,...,T\}$, the number of iterations is bounded by an integer $S$. 
\end{assumption}
\begin{assumption}
\label{LowRank:Assumption:sigma}
  $\hat L_t$ has the $\sigma^2$-uniformly bounded gradient variance, i.e., 
    \begin{align*}
        \| \nabla \hat L_t(\mathbf{w};x,y)-\nabla\hat L_t(\mathbf{w}; \mathcal{X}_{t}, \mathcal{Y}_{t})\|_2^2\leq \sigma^2, (x,y)\in \mathcal{D}_t, 
        t\in \{1,...,T\}.
    \end{align*}
\end{assumption}
Based on the assumptions, we derive the following two Theorems regarding the plasticity and stability. Specifically, we obtain the upper bound of the loss of the current task and forgetting.
\begin{theorem} (Plasticity)
\label{Null_space:convergence_rate}
Suppose Assumptions \ref{LowRank:Assumption:sigmoid-distance}, \ref{LowRank:Assumption:sameT}, and  \ref{LowRank:Assumption:sigma} hold. Let $\mathbf{w}_{t, s}$ be the parameters on task $\mathcal{T}_t$ at the $s$-th step and $\eta$ be the learning rate. Let the range of space of $\mathbf{U}^l$ be the null space of previous tasks for $l$-th layer,
then the loss of the current task $\mathcal{T}_{t}$ is upper bound by
{\small
\begin{align*}
    \hat L_t(\mathbf{w}_{t,S}) 
    \!\leq\! \hat L_t(\mathbf{w}_{t, 0})
    \!+\!\frac{\eta}{2}\sum_{s=0}^{S\!-\!1}\sum_{l = 1}^{L}{\|(I\!-\!
    \mathbf{U}^l(\mathbf{U}^l)^{\top})g_{t, s}^{l}\|_2^2} 
    \!-\!\frac{\eta}{2}\sum_{s=0}^{S\!-\!1}\|\nabla \hat L_t(\mathbf{w}_{t,s})\|_2^2 
    \!+\! \frac{S L_f \eta^2\sigma^2}{2},
\end{align*}
}
where $g_{t, s}^{l}$ is $l$-th layer gradient of  $ \hat L_t(\mathbf{w}_{t, s})$.
\end{theorem}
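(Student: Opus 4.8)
The plan is to start from the $L_f$-smoothness assumption (Assumption~\ref{LowRank:Assumption:sigmoid-distance}) applied to consecutive iterates $\mathbf{w}_{t,s+1}$ and $\mathbf{w}_{t,s}$, and then telescope over the $S$ steps. First I would write the SGD-style update with the projected gradient: since the range of $\mathbf{U}^l$ is the null space for layer $l$, the actual parameter change at layer $l$ is $\Delta w^l = \mathbf{U}^l(\mathbf{U}^l)^{\top} g^l_{t,s}$, so that $\mathbf{w}_{t,s+1}-\mathbf{w}_{t,s}$ is the concatenation of $-\eta\,\mathbf{U}^l(\mathbf{U}^l)^{\top}g^l_{t,s}$ over layers. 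Plugging $\mathbf{w}=\mathbf{w}_{t,s+1}$, $\mathbf{v}=\mathbf{w}_{t,s}$ into the smoothness inequality gives a one-step descent bound involving $\langle \nabla \hat L_t(\mathbf{w}_{t,s}), \mathbf{w}_{t,s+1}-\mathbf{w}_{t,s}\rangle$ and $\tfrac{L_f}{2}\|\mathbf{w}_{t,s+1}-\mathbf{w}_{t,s}\|_2^2$.

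The key algebraic step is to rewrite the inner-product term. Decomposing the full gradient into its projection onto each $\mathbf{U}^l(\mathbf{U}^l)^{\top}$ and its orthogonal complement $(I-\mathbf{U}^l(\mathbf{U}^l)^{\top})$, I would use the identity that, because $\mathbf{U}^l(\mathbf{U}^l)^{\top}$ is an orthogonal projector, $\langle g, \mathbf{U}(\mathbf{U})^{\top} g\rangle = \|\mathbf{U}(\mathbf{U})^{\top}g\|_2^2$ and the Pythagorean split $\|\mathbf{U}(\mathbf{U})^{\top}g\|_2^2 = \|g\|_2^2 - \|(I-\mathbf{U}(\mathbf{U})^{\top})g\|_2^2$. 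This converts the projected inner product into $\|\nabla \hat L_t\|_2^2$ minus the residual norm term $\|(I-\mathbf{U}^l(\mathbf{U}^l)^{\top})g^l_{t,s}\|_2^2$, which is exactly the combination appearing in the theorem. Summing over layers and recognizing that $\|\nabla \hat L_t(\mathbf{w}_{t,s})\|_2^2=\sum_l \|g^l_{t,s}\|_2^2$ reproduces the first two gradient terms after multiplying by $\eta$.

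The stochastic variance term $\tfrac{S L_f\eta^2\sigma^2}{2}$ must come from the quadratic $\tfrac{L_f}{2}\|\mathbf{w}_{t,s+1}-\mathbf{w}_{t,s}\|_2^2$ term together with Assumption~\ref{LowRank:Assumption:sigma}. Here I would treat the per-step gradient as a minibatch estimate and take expectations, using the bound $\|\nabla\hat L_t(\mathbf{w};x,y)-\nabla\hat L_t(\mathbf{w};\mathcal{X}_t,\mathcal{Y}_t)\|_2^2\le\sigma^2$ to control $\mathbb{E}\|\Delta w\|_2^2$; since the projector is nonexpansive ($\|\mathbf{U}(\mathbf{U})^{\top}v\|_2\le\|v\|_2$), projecting the noisy gradient does not inflate the variance, and each of the $S$ steps contributes at most $\tfrac{L_f\eta^2\sigma^2}{2}$, giving the stated $\tfrac{S L_f\eta^2\sigma^2}{2}$. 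Finally, telescoping the one-step bounds from $s=0$ to $S-1$ collapses $\hat L_t(\mathbf{w}_{t,s})$ into $\hat L_t(\mathbf{w}_{t,S})-\hat L_t(\mathbf{w}_{t,0})$, yielding the theorem.

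The main obstacle I anticipate is handling the interplay between the projection and the stochastic noise cleanly, in particular making the bookkeeping rigorous when splitting the stochastic gradient into its mean part (which drives the descent through $-\tfrac{\eta}{2}\|\nabla\hat L_t\|_2^2$) and its zero-mean fluctuation (which feeds the $\sigma^2$ term). Care is needed so that the projector's nonexpansiveness is invoked correctly for the second-order term while the exact projection identity is used for the first-order term, and so that the cross terms vanish (or are absorbed) under expectation; getting the factors of $\tfrac{\eta}{2}$ versus $\tfrac{L_f\eta^2}{2}$ to match the claimed constants is the delicate part.
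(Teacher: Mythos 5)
Your proposal is correct and follows essentially the same skeleton as the paper's proof: apply $L_f$-smoothness to one projected-SGD step, bound the variance contribution by $\sigma^2$ via nonexpansiveness of the projector, and telescope over $s=0,\dots,S-1$. The only real difference is algebraic: the paper factors the one-step bound through a generic biased-SGD descent lemma, using the polarization identity $2ab=a^2+b^2-(a-b)^2$ so that the bias term $\|\nabla \hat L_t(\mathbf{w}_{t,s})-\mathbb{E}[\nabla h]\|_2^2$ only becomes $\sum_l\|(I-\mathbf{U}^l(\mathbf{U}^l)^{\top})g^l_{t,s}\|_2^2$ at the point of application, whereas you exploit the orthogonal-projector structure directly via $\langle g, \mathbf{U}\mathbf{U}^{\top}g\rangle=\|\mathbf{U}\mathbf{U}^{\top}g\|_2^2$ and the Pythagorean split; for an orthogonal projector these two computations are equivalent, and your version is slightly more direct while the paper's lemma is more general (it works for any biased update direction).

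One constant-tracking point you should make explicit, since you flagged it as the delicate part: the first-order term alone gives $-\eta\|\mathbf{U}^l(\mathbf{U}^l)^{\top}g^l_{t,s}\|_2^2$, i.e., coefficient $\eta$, not the theorem's $\eta/2$; the stated constants only emerge after combining it with the deterministic part $\tfrac{L_f\eta^2}{2}\|\mathbf{U}^l(\mathbf{U}^l)^{\top}g^l_{t,s}\|_2^2$ of the second-order term and invoking the step-size condition $\eta\leq 1/L_f$, under which
\begin{align*}
-\eta\,\|\mathbf{U}^l(\mathbf{U}^l)^{\top}g^l_{t,s}\|_2^2+\tfrac{L_f\eta^2}{2}\,\|\mathbf{U}^l(\mathbf{U}^l)^{\top}g^l_{t,s}\|_2^2\leq -\tfrac{\eta}{2}\,\|\mathbf{U}^l(\mathbf{U}^l)^{\top}g^l_{t,s}\|_2^2,
\end{align*}
after which the Pythagorean split with coefficient $\eta/2$ yields exactly the theorem. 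This condition $\eta\leq 1/L_f$ is stated in the paper's auxiliary lemma (though omitted from the theorem statement itself), and your proof needs it in the same way; with it, your handling of the expectation (unbiasedness killing the cross term, nonexpansiveness bounding $\mathbb{E}\|\mathbf{U}\mathbf{U}^{\top}(\tilde g-g)\|_2^2\leq\sigma^2$) closes the argument correctly.
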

\begin{theorem} (Stability)
\label{Null_space:forgetting}
Suppose Assumptions \ref{LowRank:Assumption:sigmoid-distance}, \ref{LowRank:Assumption:sameT}, and  \ref{LowRank:Assumption:sigma} hold. Let $\mathbf{w}_{t, s}$ be the parameters on  task $\mathcal{T}_t$ at the $s$-th and $\eta$ be the learning rate. Let $\hat L_{1:t-1}$ be the sum of empirical loss function of previous $t-1$ tasks and $g_{1:t-1, s}^{l}$ is its gradient of $l$-th layer at $\mathbf{w}_{t, s}$. 
Let
$g_{t, s}^{l}$ be the gradient of the current task at $\mathbf{w}_{t, s}$ of $l$-th layer. 
Let the range of space of $\mathbf{U}^l$ be the null space of previous tasks for $l$-th layer, then the forgetting of the previous $t-1$ tasks generated by training on the task $\mathcal{T}_t$ is upper bound by 
\begin{align*}
    \hat L_{1:t-1}(\mathbf{w}_{t, S})   -  \hat L_{1:t-1}(\mathbf{w}_{t, 0})
    \leq &  
    \eta \sum_{s=0}^{S-1}\sum_{l=1}^{L} \|\mathbf{U}^l(\mathbf{U}^l)^{\top}\|_2 \|g_{t, s}^l\|_2 \| g_{1:t-1, s}^{l}\|_2 
    \\
    &+ \frac{L_f}{2}\eta^2\sum_{s=0}^{S-1}\sum_{l=1}^{L}\|\mathbf{U}^l(\mathbf{U}^l)^{\top}\|_2^2\|g_{t, s}^l\|_2^2.
\end{align*}
\end{theorem}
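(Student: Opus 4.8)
The plan is to treat this as a one-step descent estimate for $\hat L_{1:t-1}$ followed by a telescoping sum over the $S$ inner iterations. The starting point is that, while training on $\mathcal{T}_t$, the update at step $s$ is the projected current-task gradient applied layer by layer, so that $\mathbf{w}_{t,s+1}=\mathbf{w}_{t,s}-\eta\,\Delta\mathbf{w}_{t,s}$, where $\Delta\mathbf{w}_{t,s}$ is the stacked update whose layer blocks are $\Delta w^l=\mathbf{U}^l(\mathbf{U}^l)^{\top}g_{t,s}^l$ as in \eqref{LowRank:projection}. The quantity we must control, $\hat L_{1:t-1}(\mathbf{w}_{t,S})-\hat L_{1:t-1}(\mathbf{w}_{t,0})$, is exactly the accumulated change in the previous-task loss induced by these $S$ steps, so I would write it as $\sum_{s=0}^{S-1}\bigl(\hat L_{1:t-1}(\mathbf{w}_{t,s+1})-\hat L_{1:t-1}(\mathbf{w}_{t,s})\bigr)$ and bound each summand separately.

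For a single summand I would apply the $L_f$-smoothness inequality of Assumption~\ref{LowRank:Assumption:sigmoid-distance} to $\hat L_{1:t-1}$ at the consecutive iterates, giving
\begin{align*}
\hat L_{1:t-1}(\mathbf{w}_{t,s+1})-\hat L_{1:t-1}(\mathbf{w}_{t,s})
\leq \langle \nabla \hat L_{1:t-1}(\mathbf{w}_{t,s}),\,-\eta\,\Delta\mathbf{w}_{t,s}\rangle
+\frac{L_f}{2}\eta^2\|\Delta\mathbf{w}_{t,s}\|_2^2 .
\end{align*}
Because both the gradient and the update split across layers, the first-order term equals $-\eta\sum_{l=1}^{L}\langle g_{1:t-1,s}^l,\,\mathbf{U}^l(\mathbf{U}^l)^{\top}g_{t,s}^l\rangle$, which I would bound by Cauchy--Schwarz together with the submultiplicativity of the operator norm, $\|\mathbf{U}^l(\mathbf{U}^l)^{\top}g_{t,s}^l\|_2\leq \|\mathbf{U}^l(\mathbf{U}^l)^{\top}\|_2\,\|g_{t,s}^l\|_2$, producing the first term of the claimed bound. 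Likewise the quadratic term decomposes as $\frac{L_f}{2}\eta^2\sum_{l=1}^{L}\|\mathbf{U}^l(\mathbf{U}^l)^{\top}g_{t,s}^l\|_2^2$, and the same operator-norm estimate yields the second term.

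Summing these per-step bounds over $s=0,\dots,S-1$ (a finite range by Assumption~\ref{LowRank:Assumption:sameT}) telescopes the left-hand side to $\hat L_{1:t-1}(\mathbf{w}_{t,S})-\hat L_{1:t-1}(\mathbf{w}_{t,0})$ and reproduces the stated double sums over $s$ and $l$, completing the argument. The step I expect to require the most care is the bookkeeping that turns the single inner product into the layer-wise sum and then into a product of norms: one must keep the projection matrix attached to the current-task gradient $g_{t,s}^l$ rather than to $g_{1:t-1,s}^l$, so that the factor $\|\mathbf{U}^l(\mathbf{U}^l)^{\top}\|_2$ multiplies $\|g_{t,s}^l\|_2$, matching the asymmetric roles of the two gradients in the statement. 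A secondary point is the smoothness constant of the summed loss $\hat L_{1:t-1}$, since Assumption~\ref{LowRank:Assumption:sigmoid-distance} is stated per task; I would either invoke smoothness of $\hat L_{1:t-1}$ directly with constant $L_f$ or absorb the $(t-1)$ factor accordingly. Note that Assumption~\ref{LowRank:Assumption:sigma} does not enter this particular inequality, as the bound is expressed directly through the gradients $g_{t,s}^l$ and $g_{1:t-1,s}^l$ at $\mathbf{w}_{t,s}$; the essential tools are smoothness and the finite iteration count.
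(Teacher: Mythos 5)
Your proposal matches the paper's proof essentially step for step: a one-step $L_f$-smoothness bound on $\hat L_{1:t-1}$ along the projected update $\Delta w^l=\mathbf{U}^l(\mathbf{U}^l)^{\top}g_{t,s}^l$, a layer-wise decomposition of the cross term bounded via Cauchy--Schwarz together with $\|\mathbf{U}^l(\mathbf{U}^l)^{\top}g_{t,s}^l\|_2\leq\|\mathbf{U}^l(\mathbf{U}^l)^{\top}\|_2\|g_{t,s}^l\|_2$, and a telescoping sum over $s=0,\dots,S-1$. If anything your write-up is slightly cleaner than the paper's: the paper additionally posits $\langle\Delta\mathbf{w}_{t,s},\mathbf{g}_{1:t-1,s}\rangle\leq 0$, which its final bound never actually needs (as you note, Cauchy--Schwarz controls the absolute value of the cross term), and you correctly observe both that Assumption~\ref{LowRank:Assumption:sigma} is unused here and that the smoothness constant of the summed loss $\hat L_{1:t-1}$ deserves the caveat the paper glosses over.
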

\begin{remark}
From Theorems \ref{Null_space:convergence_rate} and \ref{Null_space:forgetting} , we could conclude that $ \mathbf{U}^l$ plays a key role in the stability and plasticity. 
According to  Theorem \ref{Null_space:convergence_rate}, if the rank of the null space is larger, 
then the term $\|(I-  \mathbf{U}^l(\mathbf{U}^l)^{\top})g_{t, s}^{l}\|_2^2$ will be smaller and the upper bound of $\hat L_t(\mathbf{w}_{t,S})$ will be smaller. Therefore, the model could learn the current task better, indicating better plasticity. However, according to  Theorem \ref{Null_space:forgetting}, the larger the rank of the null space, the larger the term $\|\mathbf{U}^l(\mathbf{U}^l)^{\top}\|_2^2$. Therefore, the upper bound of the forgetting would be larger, resulting in poorer stability.
The two theorems indicate the inherent properties of the stability-plasticity dilemma that it is hard to have high plasticity and high stability simultaneously.  
\end{remark}

\section{Experiments}
\subsection{Experimental Setup}
\noindent \textbf{Datasets and Architecture.} Following \cite{Wang_2021_CVPR}, we perform experiments on three continual learning benchmarks: 10-Split CIFAR-100 (10-S-CIFAR100), 20-Split-CIFAR-100 (20-S-CIFAR100), and 25-Spilt TinyImageNet (25-S-TinyImageNet). Specifically, 10-Split CIFAR-100 and 20-Split CIFAR-100 are constructed by splitting CIFAR100 \cite{krizhevsky2009learning} into 10 and 20 sequential tasks, respectively. Each task contains the same classes without replacement out of the total 100 classes. Similarly, 25-Spilt TinyImageNet
is constructed by splitting 200 classes of TinyImageNet \cite{tinyimgnet} into 25 sequential tasks, where each task has 8 classes.
We use ResNet-18 \cite{he2016identity} as the backbone \cite{he2016identity, buzzega2020dark, guo2020improved, chaudhry2020continual}. 
All tasks share the same backbone, while each task has its separate classifier.

\noindent \textbf{Baselines.}
We compare the proposed method against competitive and well-established methods\footnote{We do not compare with replay-based methods because they store the data of previous tasks, which is out of the scope of this paper's setting.}, including 5 regularization-based methods using importance measure (EWC \cite{kirkpatrick2017overcoming},
MAS\cite{aljundi2018memory}, MUC-MAS\cite{liu2020more}, SI \cite{zenke2017continual}, and CPR \cite{cha2020cpr}), 2 regularization-based methods using knowledge distillation (LwF\cite{li2017learning} and GD-WILD 
\cite{lee2019overcoming}), 1 architecture-based method (InstAParam\cite{chen2020mitigating}), and 6 algorithm-based methods (GEM\cite{lopez2017gradient}, A-GEM\cite{chaudhry2018efficient}, MEGA\cite{guo2020improved}, 
OWM\cite{zeng2019continual}, GPM\cite{saha2021gradient}, and Adam-NSCL\cite{Wang_2021_CVPR}). We also provide a lower bound performance of Vanilla which trains tasks sequentially without any countermeasure to forgetting.

\noindent \textbf{Performance Metrics.}
To evaluate the performance,
we use two standard metrics: 
 a)
Average accuracy (ACC)~\cite{mirzadeh2020understanding, lopez2017gradient}
is the average test accuracy evaluated on all tasks after learning all tasks sequentially;
b) Backward Transfer (BWT)~\cite{lopez2017gradient, chaudhry2018efficient} is the average performance decrease of the network on previous tasks after new learning. 
In formal, ACC and BWT are defined as:
$    \text{ACC} = \frac{1}{T}\sum_{i=1}^{T}A_{T, i} $, 
$    \text{BWT} = \frac{1}{T-1}\sum_{i=1}^{T-1}(A_{T, i} - A_{i, i})$,
where $T$ is the total number of tasks and $A_{j, i}$ is the accuracy of task $\mathcal{T}_i$ after training on the task $\mathcal{T}_j$ sequentially. The larger the two metrics, the better the model. If the performances of ACC are similar, then the method with a larger value of BWT is better \cite{lopez2017gradient}.

\noindent \textbf{Implementation Details.}
\label{LowRank:Implementation_details}
  When obtaining the null space at the end of each task, we approximate the constraint $\|X_{t-1}^l\Delta w^l\|_1 \leq \epsilon(t)$ like \cite{Wang_2021_CVPR}.
  Specifically, when computing the current candidate null space, we approximate the current candidate null space with the singular values satisfying $\lambda \in \{\tilde \lambda | \tilde \lambda \leq   \alpha(t)\lambda_{\text{min}}^l\}$, where $\lambda_{\text{min}}^l$ is the smallest singular value of $\tilde X_{t-1}^l$ and $\alpha(t)$ is a positive value which balances the stability and plasticity. For the non-uniform constraint strength, we use a simple strategy that $\alpha(t)$ linearly decreases with task number $t$ observed so far.
  We perform experiments on the 10-Split CIFAR-100 and 20-Split CIFAR-100 5 runs, and 25-Spilt TinyImageNet 3 runs. 
  Note that the update of the null space in our method is only performed at the end of task, and no data of old data are stored during training.
  More implementation details, including the hyperparameters settings, can be found in Appendix.

\begin{table*}[t]
\centering
\caption{Results of ACC (\%) and BWT (\%) evaluated on the all tasks after finishing learning all tasks. [$\uparrow$] higher is better.
}
\resizebox{\columnwidth}{!}{
\begin{tabular}{@{}lcccccc@{}}
\toprule
\multirow{2}{*}{\textbf{Method}} &  \multicolumn{2}{c}{\quad \textbf{10-S-CIFAR-100}\quad}              & \multicolumn{2}{c}{\quad\textbf{20-S-CIFAR-100}\quad}          &          \multicolumn{2}{c}{\quad\textbf{25-S-TinyImagNet}\quad}              \\
            & \quad ACC [$\uparrow$]           & BWT [$\uparrow$]             &  \quad ACC [$\uparrow$]        & BWT [$\uparrow$]          & \quad ACC [$\uparrow$]    
                        
                        & BWT   [$\uparrow$]    \\
\midrule     
Vanilla             &     34.91    &    -60.96   &     30.48    &   -65.90      &    16.96       &  -66.06
\\
EWC \cite{kirkpatrick2017overcoming}              & 70.77          & -2.83       & 71.66          & -3.72         & 52.33          &  -6.71
\\
MAS\cite{aljundi2018memory}            &     66.93      &    -4.03    &    63.84       &   -6.29       &    47.96       & -7.04  \\
MUC-MAS\cite{liu2020more}          &      63.73     &   -3.38     &     67.22      &  -5.72        &     41.18      & -4.03 \\
SI \cite{zenke2017continual}         &      60.57     &   -5.17     &     59.76      &    -8.62      &      45.27     & -4.45 \\
CPR \cite{cha2020cpr}      &   74.56        &   -2.51    &      72.98  &   -2.32       &     58.01     & -2.45  \\
LwF\cite{li2017learning}         &        70.70   &   -6.27     &  74.38         &  -9.11        &    56.57       &  -11.19 \\
GD-WILD \cite{lee2019overcoming}          &     71.27      &    -18.24    &     77.16      &       -14.85   &    42.74      &  -34.58 \\
  InstAParam\cite{chen2020mitigating}     & 47.84          &     -11.92   &    51.04       & -4.92         &     34.64      & -10.05 \\
GEM \cite{lopez2017gradient}         &       49.48    &     2.77   &      68.89     &   -1.2       &     -      & - \\
A-GEM \cite{chaudhry2018efficient}           &    49.57     &    -1.13   &    61.91      &  -6.88      &    53.32        & -7.68 \\
  MEGA\cite{guo2020improved}       &     54.17      &  -2.19      &       64.98    &     -5.13     &         57.12  &  -5.90\\
OWM\cite{zeng2019continual}         &    68.89       &   -1.88     &      68.47      &    -3.37      &   49.98         & -3.64 \\
GPM \cite{saha2021gradient}              &   73.66        &    -2.20    &      75.20     &   -7.58      & 58.96          &  -6.96 \\
Adam-NSCL \cite{Wang_2021_CVPR}             & 75.03          & -2.98       & 75.59          & -3.66         & 59.10          &  -7.19 \\
\midrule 
AdNS (Ours)            &     \textbf{77.21}     &   -2.32    &     \textbf{77.33}      &   -3.25       & \textbf{59.77}          &   -4.58 \\
\bottomrule
\end{tabular}
    }
\label{LowRankNullSpace:table_acc_bwt}
\end{table*}
\subsection{Performance Comparison}
We show the comparison results of  the proposed method and baselines in Table \ref{LowRankNullSpace:table_acc_bwt}. The results execept Vanilla, CPR, and GPM are from \cite{Wang_2021_CVPR}. According to Table \ref{LowRankNullSpace:table_acc_bwt}, our method achieves the highest average accuracy (ACC) with comparable forgetting (BWT) on all benchmarks. 

Compared with regularization-based methods, the proposed method achieves over 5\% ACC higher with less forgetting than EWC and MAS on all benchmarks. Although MUC-MAS and SI obtained comparable forgetting on the 25-Spilt TinyImageNet, their ACCs are lower than 50\%, largely below AdNS's ACC (59.77\%). The forgetting of CPR on the 20-Split CIFAR-100 and 25-Spilt TinyImageNet are less, while its performance of ACC is worse than the proposed method on all benchmarks.
For the regularization-based methods using knowledge distillation,
the ACCs of LwF and GD-WILD are comparable to the proposed method on the 20-Split CIFAR-100 while their stability is very poor. 
For the architecture-based method, AdNS is significantly better than InstAParam, e.g., over 25\% ACC higher on three benchmarks.

Now we compare AdNS with algorithm-based methods.
On the 10-Split CIFAR-100, 
although the forgetting of GEM, A-GEM, MEGA, OWM, and GPM is slightly better than our method, their ACCs are largely below the proposed method.
It is also observed that our method can obtain better performance with less forgetting than Adam-NSCL.
 As shown in Table \ref{LowRankNullSpace:table_acc_bwt}, 
 the ACCs of the proposed method are 2.18\%, 1.74\%, and 0.66\% higher than Adam-NSCL on the 10-Split CIFAR-100, 20-Split CIFAR-100, and 25-Spilt TinyImageNet, respectively. 
As for forgetting, the BWTs of the proposed method are 0.66\%, 0.41\%, and 2.61\% better than Adam-NSCL on the three benchmarks, respectively.
 It is because AdNS considers the shared null space between null spaces 
 and also leverages knowledge distillation 
 to further mitigate the stability-plasticity dilemma.
\subsection{Ablation Studies and Analyses} 
 \begin{table*}[t]
\centering
\caption{Different methods to obtain the shared null space. ``Random" obtains the shared null space with the dimensions randomly from $\mathbf{U}_{\text{pre}}^l$ and $\mathbf{U}_{\text{cur}}^l$.
}
\resizebox{\columnwidth}{!}{
\begin{tabular}{@{}c|ccccccc@{}}
\toprule
 \multirow{2}{*}{\textbf{Method}}  &  \multicolumn{2}{c}{\quad \quad\textbf{10-S-CIFAR-100}\quad\quad }              & \multicolumn{2}{c}{\quad\quad \textbf{20-S-CIFAR-100}\quad \quad}          &          \multicolumn{2}{c}{\quad\quad \textbf{25-S-TinyImageNet}\quad\quad}              \\
 & \quad\quad \textbf{ACC} [$\uparrow$]          & \textbf{BWT} [$\uparrow$]          &  \quad\quad \textbf{ACC} [$\uparrow$]        & \textbf{BWT}   [$\uparrow$]          & \quad \quad \textbf{ACC } [$\uparrow$]      
                        
                        & \textbf{BWT} [$\uparrow$]       \\

\midrule 
Random        &     \quad   76.10     &   -4.13            &   \quad  75.70    &     -5.81      &  \quad  59.07  &  -6.78 \\
Low-Rank        &  \quad  \textbf{76.45}     &  \textbf{-2.87 }    & \quad    \textbf{76.31}      &   \textbf{-3.66 }    &  \quad   \textbf{59.26}       &  \textbf{-5.77} \\
\bottomrule
\end{tabular}
    }
\label{LowRankNullSpace:table:extract_share}
\end{table*}
\begin{figure}[t]
\centering
  \includegraphics[width=0.32\linewidth]{./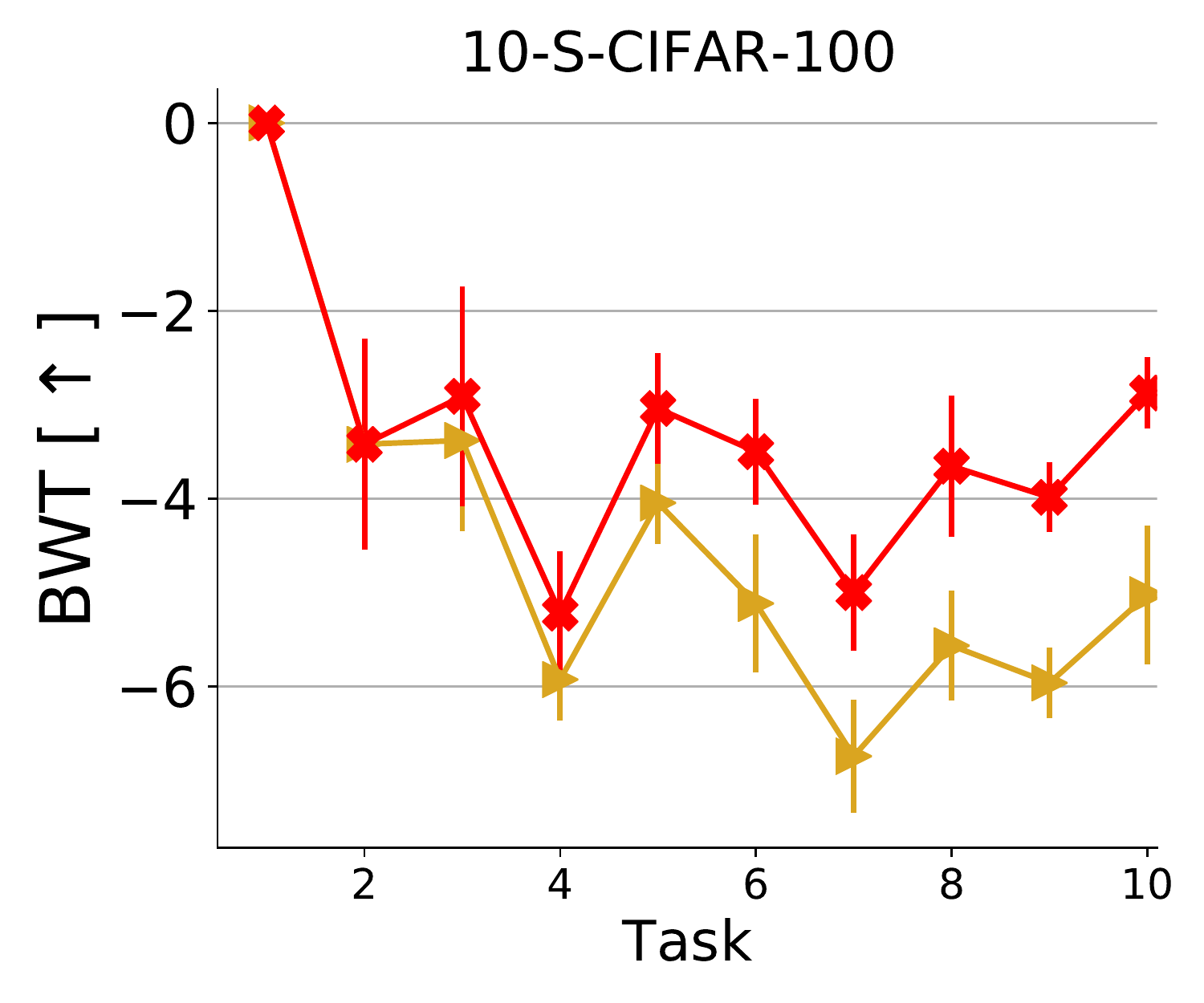}
   \includegraphics[width=0.32\linewidth]{./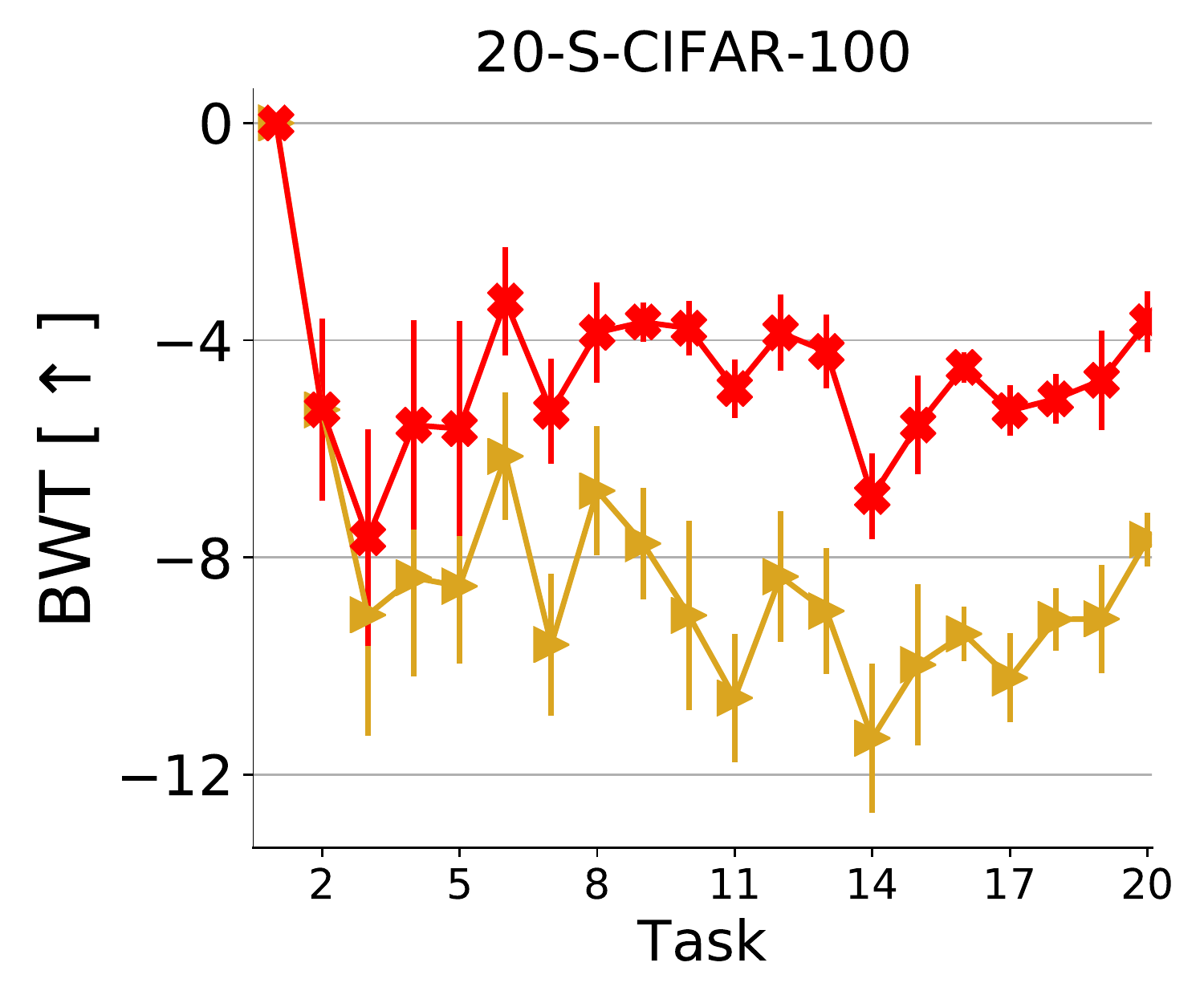}
   \includegraphics[width=0.32\linewidth]{./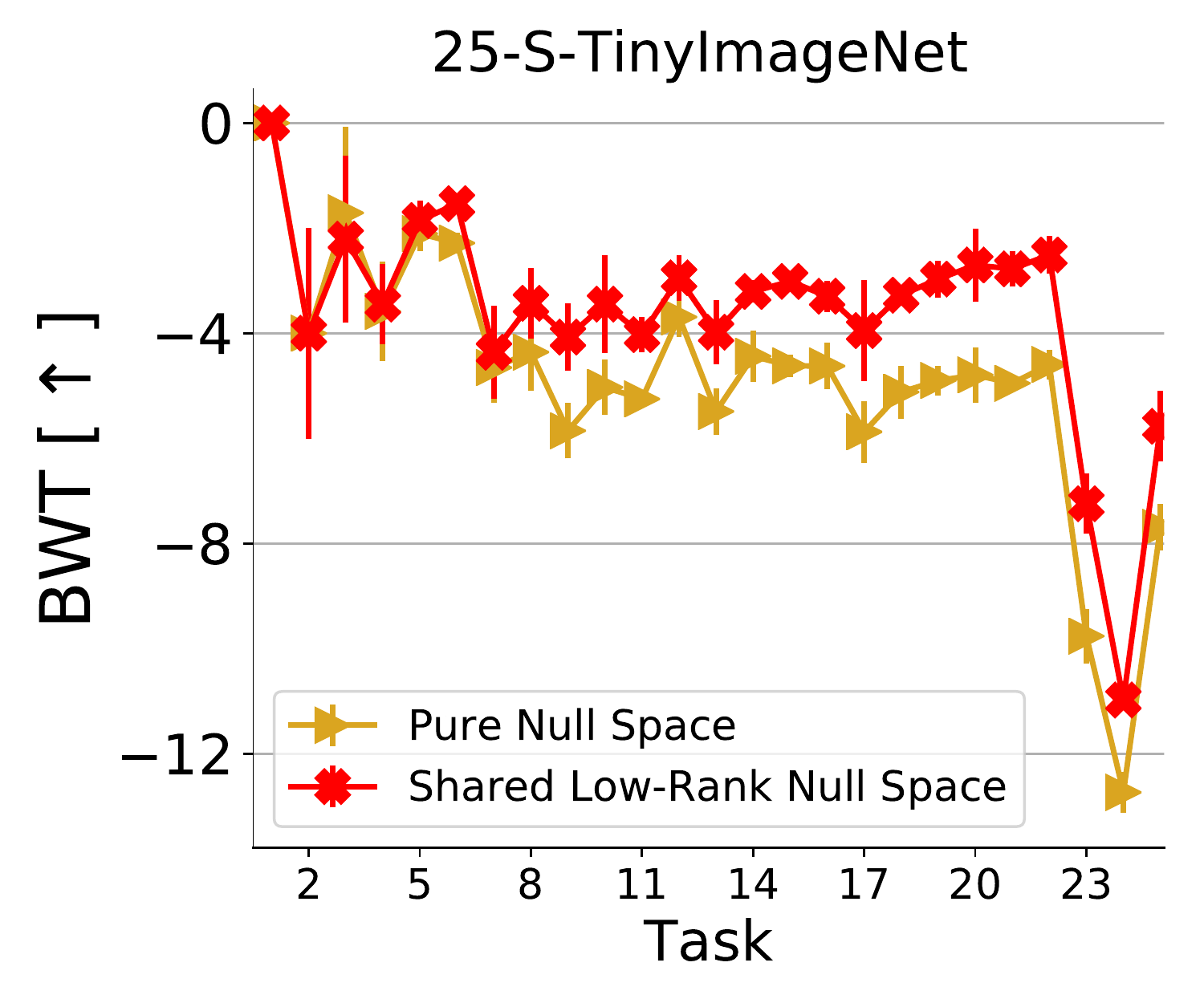}
   \caption{Comparison of forgetting between the pure null space and the shared low-rank null space. The results are the curves of BWT when the network has been trained on  each task ([$\uparrow$]  Higher BWT indicates less forgetting). 
   }
\label{LowRank:forgetting}
\end{figure}

\begin{table}[t]
\centering
\caption{Effect of each component. ``NS" denotes ``Null Space", ``LR" denotes ``Low-Rank", ``NCS" denotes ``Non-uniform Constraint Strength", and ``ID" denote Intra-task Distillation. 
}
\resizebox{\columnwidth}{!}{
\begin{tabular}{@{}ccccccccccc@{}}
\toprule
 \multicolumn{4}{c}{\textbf{Module}}  &  \multicolumn{2}{c}{ \textbf{\quad 10-S-CIFAR-100}\quad}              & \multicolumn{2}{c}{\quad \textbf{20-S-CIFAR-100}\quad}          &          \multicolumn{2}{c}{\quad\textbf{25-S-TinyImagnet}\quad}              \\
    \quad \textbf{NS} \quad &    \textbf{LR}     &  \textbf{NCS} &   \textbf{ID} \quad       & ACC [$\uparrow$]          & BWT  [$\uparrow$]         & \quad ACC  [$\uparrow$]        & BWT [$\uparrow$]          & \quad ACC   [$\uparrow$]    
                        & BWT    [$\uparrow$]    \\

\hline
   \Checkmark  &    &  &   &   76.11       &    -5.02    &      75.53     &  -7.67       &    59.20        &   -7.69 \\
      \Checkmark  & \Checkmark   &  &   & 76.45          &   -2.87     &  76.31         &  -3.66       & 59.26          &   -5.77 \\
            \Checkmark  &    & \Checkmark &   & 76.15          &   -4.98     &    75.66     & -7.36   &     59.42 &    -5.07          \\
 \Checkmark  &  \Checkmark   & \Checkmark &        &   76.45     &   -2.84     &  76.34         &    -3.55      & 59.27          &   -5.11 \\
    \Checkmark  &    &  &   \Checkmark   & 76.99         &   -4.36     &     77.04      &    -6.09     &     \textbf{60.00}      &   -6.98 \\
  \Checkmark  &        \Checkmark   & \Checkmark &   \Checkmark   &     \textbf{77.21}     &   \textbf{-2.32}    &      \textbf{77.33}    &     \textbf{-3.25 }    & 59.77          &   \textbf{-4.58} \\
\bottomrule
\end{tabular}
    }
\label{LowRankNullSpace:abla_modules}
\end{table}
\begin{figure}[t]
    \centering
    {
    \setlength\tabcolsep{0pt}
    \resizebox{1.0\textwidth}{!}{
    \begin{tabular}{cc}
        \includegraphics[width=0.3\textwidth]{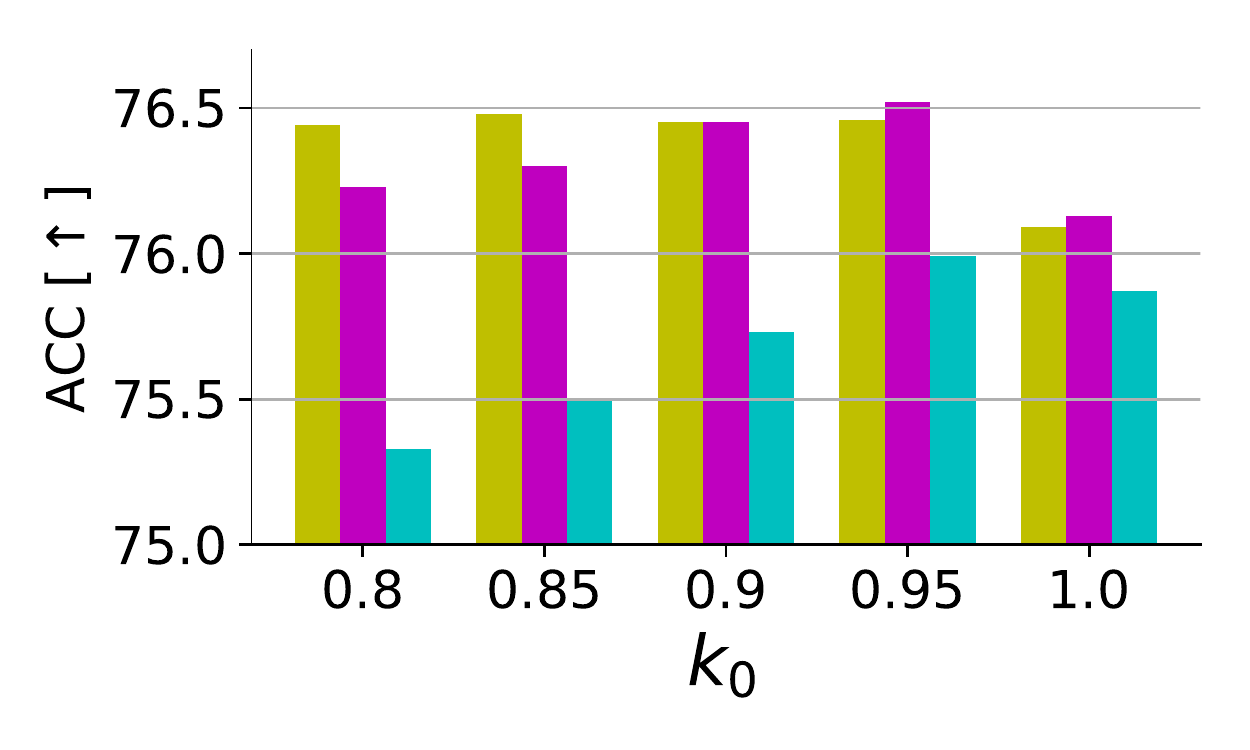} &
        \includegraphics[width=0.3\textwidth]{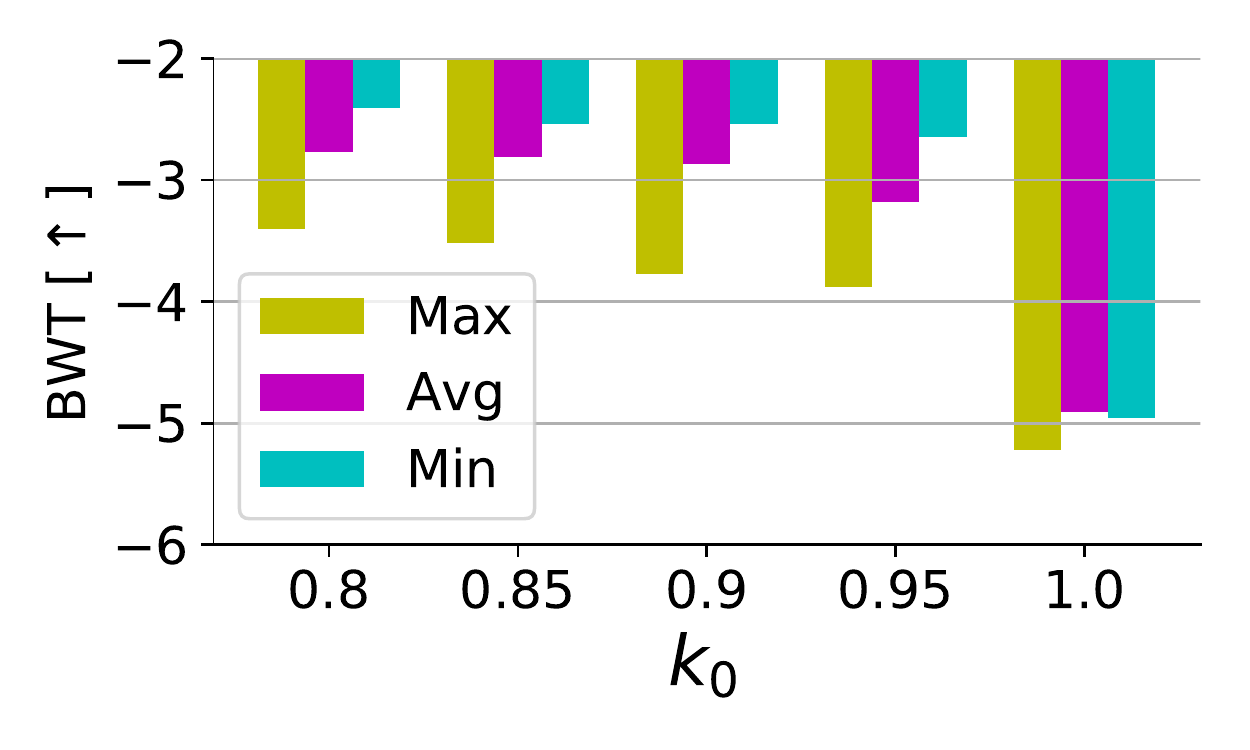} \\
    \end{tabular}
    }
    }
    \caption{The effect of $k$. The dataset is 10-Split CIFAR-100. 
    [$\uparrow$] higher is better.}
    \label{LowRankNullSpace:k_0}
\end{figure}

    \begin{figure*}[t]
\centering
   \includegraphics[width=0.85\linewidth]{./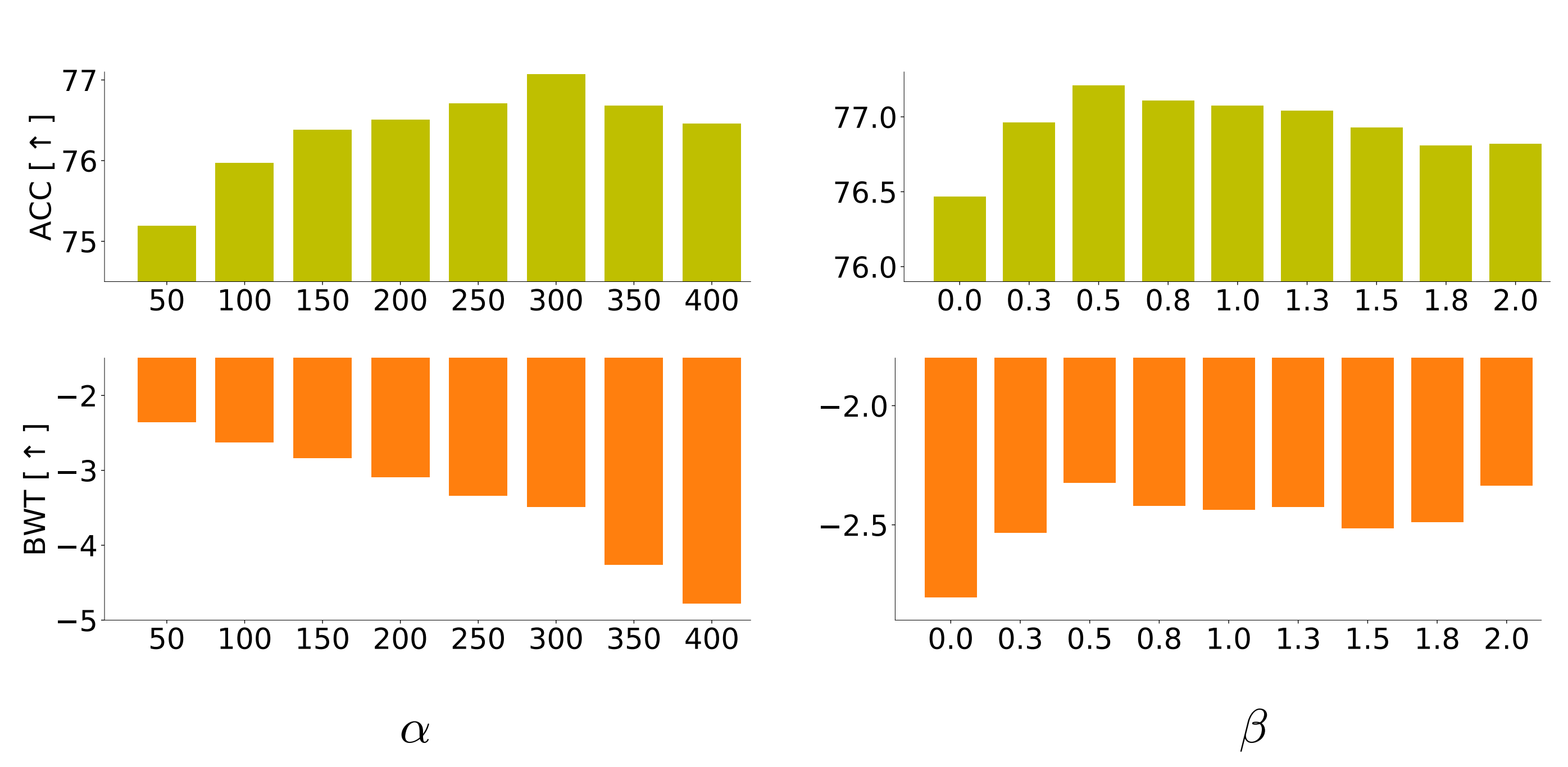}
   \caption{The effect of $\alpha$ and $\beta$. The dataset is 10-Split CIFAR-100. [$\uparrow$] higher is better. 
   }
\label{LowRank:figure_abla_a_beta}
\end{figure*}
  
\noindent \textbf{Effect of Low-Rank.} First, we compare the forgetting of using shared low-rank null space and pure null space, in which pure null space projects the gradient onto the current candidate null space instead of the shared null space. 
According to Fig. \ref{LowRank:forgetting}, with new tasks continually coming, the superiority of the shared low-rank null space on alleviating forgetting becomes more and more obvious than the pure null space, validating that our shared low-rank  null space can relieve the catastrophic forgetting due to null space approximation.
Next, we validate 
whether it is effective to use low-rank approximation to obtain the shared space. 
To realize this, we compare the performance of the shared low-rank null space with the method that obtains the shared null space by extracting  dimensions randomly from $\mathbf{U}_{\text{pre}}^l$ and $\mathbf{U}_{\text{cur}}^l$ \footnote{
We extract $k_l/2$ dimensions randomly from $\mathbf{U}_{\text{pre}}^l$ and another $k_l/2$ dimensions randomly  from $\mathbf{U}_{\text{cur}}^l$.
If the dimension of $\mathbf{U}_{\text{pre}}^l$ or  $\mathbf{U}_{\text{cur}}^l$ is smaller than $k_l/2$, to make up $k_l$ dimensions, we concatenate the whole matrix and the rest dimensions randomly extracted from another matrix.
}. For a fair comparison, intra-task distillation is excluded, and all the settings including the constraints are the same.  
According to Table \ref{LowRankNullSpace:table:extract_share}, using low-rank approximation 
can achieve higher ACC and better BWT on all benchmarks. Especially for BWT, ``Low-Rank" is at least 1\% better than the method of ``Random", indicating the effectiveness of low-rank approximation. 

\noindent \textbf{Effect of Each Component.}
We now validate the effect of each component to demonstrate that AdNS could achieve better stability-plasticity trade-off.
From Table \ref{LowRankNullSpace:abla_modules}, we can find that  (1) Both  ``Low-Rank" and ``Non-uniform Constraint Strength" decrease forgetting significantly with better performance. Especially, on the 20-Split CIFAR-100, the shared low-rank null space (``NS" + ``LR") is 4.01\% BWT better and 0.78\% ACC higher than the pure null space (``NS"). On the 25-Spilt TinyImageNet, adding non-uniform constraint strength (``NS" + ``NCS") increases BWT 2.62\%. 
(2) Intra-task Distillation improves ACC significantly with better BWT.  For example, on the 
20-Split CIFAR-100, adding intra-task distillation (``NS" + "ID") increases ACC 1.51 \% with less forgetting. (3) Combing all modules can achieve better stability and plasticity simultaneously. For example, on 20-Spilt CIFAR100, combing all modules is 1.8\% ACC  higher and 4.42\% BWT better than the pure null space (``NS").

\noindent \textbf{Effect of $k$.}  Now we explore the effect of $k$.  We use $k$ to denote $k_l$ because we apply the same operation for $k_l$ at each layer.
Assume that the dimensions of  $\mathbf{U}_{\text{pre}}^l$ and $\mathbf{U}_{\text{cur}}^l $ are $p$ and $q$, respectively. Then ``Max", ``Avg", and ``Min"\footnote{$\text{Max}(\cdot), \text{Avg}(\cdot)$, and $\text{Min}(\cdot)$ are functions that compute the maximum, average, and minimum values over inputs, respectively.}  means that $k = \text{Max}(p, q)\times k_0$, $k = \text{Avg}(p, q)\times k_0$, and $k = \text{Min}(p, q)\times k_0$, respectively, where $k_0$ is used to adjust the value of $k$. As shown in Fig. \ref{LowRankNullSpace:k_0}, for all strategies, with the decrease of $k_0$, forgetting continually decreases while ACC first increases and then decreases. It is because with the decrease of $k_0$, $k$ becomes smaller and the rank of null spaces is smaller.
According to Theorems \ref{Null_space:convergence_rate} and \ref{Null_space:forgetting}, it would result in better forgetting and worse plasticity, and thus the ACC first increases then decreases as a result of the stability-plasticity dilemma.
Considering both stability and plasticity, we choose the strategy of ``Avg'' for all experiments.

\noindent \textbf{Effect of $\alpha$ and $\beta$.} Finally, we explore the effect of $\alpha$ in constraint and $\beta$ in intra-task distillation, respectively. For simplification, we apply constant $\alpha$ for all tasks.
Larger $\alpha$ indicates looser constraint in  (\ref{LowRank:constraint}). 
As shown in Fig. \ref{LowRank:figure_abla_a_beta}, with the increase of $\alpha$, BWT becomes worse, and ACC first increases and then decreases as a result of the stability-plasticity dilemma.
The results agree with the theoretical analysis that larger null space (looser constraint) leads to better plasticity and worse stability and  vice-versa.
For intra-task distillation, as shown in Fig. \ref{LowRank:figure_abla_a_beta}, with proper $\beta$, it could mitigate the forgetting and achieve a good balance between stability and plasticity. Too large or too small $\beta$ will have a negative impact on the performance.

\section{Conclusion}
To relieve the stability-plasticity dilemma, we propose a new algorithm-based continual learning method, Advanced Null Space (AdNS).  Specifically, AdNS extracts the shared low-rank null space based on low-rank approximation and projects the gradient onto the null space to reduce forgetting. Moreover, we introduce 
non-uniform constraint strength to further alleviate the catastrophic forgetting, and present intra-task distillation to improve performance. Furthermore, we provide theoretical findings of the impact of null space on the stability and plasticity, respectively. Empirical results validate that the proposed algorithm could achieve a better stability-plasticity trade-off.

\subsubsection{Acknowledgements.} 
Ms Yajing Kong and Dr Liu Liu are supported by ARC FL-170100117 and DP-180103424.

\bibliographystyle{splncs04}
\bibliography{egbib}

\newpage
\appendix
\section*{ \centering Appendix}
\noindent The content of the appendix is arranged as follows:
\begin{itemize}
    \item [$\bullet$] {\bf  \ref{LowRank:Appendix:Exp} Experiments}
    \begin{itemize}
    \item[$\bullet$] {\bf \ref{LowRank:Appendix:Implementation details} Implementation Details} 
    
    In this part, we will introduce the low-rank approximation, details of $\alpha(t)$, $k_l$, and experimental setup. 
    \item[$\bullet$] {\bf \ref{LowRank:Appendix:Additional Experimental results} Additional Experimental Results}  
    
    The additional experimental results include the comparison of running time, and exploration of plasticity and stability.
    \end{itemize}
    \item[$\bullet$] {\bf \ref{lowrank:appendix:theo} Theoretical Analysis. } 
    In this section, we provide the proof of Theorems \ref{Null_space:convergence_rate} and \ref{Null_space:forgetting}, respectively.
\end{itemize}
\section{Experiments}
\label{LowRank:Appendix:Exp}
This section provides the implementation details in Appendix \ref{LowRank:Appendix:Implementation details} and the additional experimental results in Appendix \ref{LowRank:Appendix:Additional Experimental results}.
\subsection{Implementation Details}
\label{LowRank:Appendix:Implementation details}
\noindent \textbf{Low-Rank Approximation} \\
Now we describe how to obtain $\mathbf{U}^l$ by solving the problem of low-rank approximation for the concatenation matrix 
$\mathbf{\tilde U}^l = [\mathbf{U}_{\text{pre}}^l, \mathbf{U}_{\text{cur}}^l] \in \mathbb{R}^{d^l \times n_0}$:
\begin{align}
    \text{minimize}_{\mathbf{ \hat U}^l}  & \quad \| \mathbf{\tilde U}^l -  \mathbf{ \hat U}^l\|_F \nonumber \\
    \text{ s.t.} & \quad \text{Rank}(\mathbf{ \hat U}^l) \leq k_l, l \in \{1,...,L\},
\end{align}
where $d^l$ is the dimension of the feature at $l$-th layer and $n_0$ is the sum of the colomns of $\mathbf{U}_{\text{pre}}^l$ and $\mathbf{U}_{\text{cur}}^l$.
Let $\bar{\mathbf{U}},\bar{\Sigma}, \bar{\mathbf{V}}^{\top} = \text{SVD}(\mathbf{\tilde U}^l)$, where $\bar{\Sigma}$ is a diagonal matrix sorted by singular values. Then the low-rank approximation matrix $\mathbf{ \hat U}^l =  \mathbf{U}^l \Sigma^l (\mathbf{V}^l)^{\top}$,  where $\mathbf{U}^l \in \mathbb{R}^{d^l \times k_l}, \Sigma^l \in \mathbb{R}^{k_l \times k_l}, \mathbf{V}^l \in \mathbb{R}^{n_0 \times k_l}$; $\Sigma^l$ is a diagonal matrix sorted by the $k_l$ largest singular values, and $\mathbf{U}^l $ and  $\mathbf{V}^l$ are constructed by the singular vectors corresponding to the $k_l$ largest singular values in $\bar{\mathbf{U}}$ and $\bar{\mathbf{V}}$, respectively. Then $\mathbf{U}^l $ is the objective matrix whose columns span the shared low-rank null space.

\noindent \textbf{Details of $\alpha(t)$} \\
Instead of solving the constraint $\|X_{t-1}^l\Delta w^l\|_1 \leq \epsilon(t)$,
we use the function $\alpha(t)$ to replace $\epsilon(t)$ to balance the stability and plasticity.
We propose non-uniform constraint strength, which linearly decreases with the task number $t$, i.e., \textcolor{black}{$\alpha_{t} = \alpha_{\text{max}} - \frac{t-1}{T-1}(\alpha_{\text{max}} - \alpha_{\text{min}})$,} where $\alpha_{\text{max}}$ and $\alpha_{\text{min}}$ are the values of $\alpha(t)$ for the first and last task, respectively.

\noindent \textbf{Details of $k_l$} \\
When computing the shared low-rank null space, it is hard to set $k_l$ for all tasks and layers manually because we lack prior knowledge about the features at each layer. Therefore, we use a task-adaptive and layer-adaptive strategy to select $k_l$, i.e., we use the strategy of ``Avg" in the paper to select $k_l$ for all experiments. Specifically, assume that the dimensions of  $\mathbf{U}_{\text{pre}}^l$ and $\mathbf{U}_{\text{cur}}^l $ are $p$ and $q$, respectively. Then ``Avg" means that $k = \text{Avg}(p, q)\times k_0$, where $\text{Avg}(\cdot)$ computes the average value of $p$ and $q$, and $k_0$ is used to adjust the value of $k$. Note that we use the same operation for each layer. Therefore, we use $k$ to denote $k_l$ at each layer. Because the dimensions of the previous null space and the current candidate null space at each layer are determined by each task and the features at that layer, obtaining $k$ by such strategy is task adaptive and layer-adaptive. 

\noindent \textbf{Datasets} \\
The CIFAR100 dataset contains 100 classes, each of which has 500 training color images and 100 testing color images. 
TinyImageNet
is a 200 classes dataset which contains 100,000 training images and 10,000 validation images and consists of $64\times 64$ color images. 
\\
\noindent \textbf{Experimental Setup} \\
We use Pytorch\footnote{\url{https://pytorch.org/}} to implement the proposed algorithm and other experiments.
 The optimizer is Adam. Following \cite{Wang_2021_CVPR}, we use EWC \cite{kirkpatrick2017overcoming} to regularize the parameters of batch normalization layer and set the  regularization coefficient to 100. The learning rates for the first task are $2\times10^{-4}$, $1\times 10^{-4}$, and $1\times10^{-4}$ for 10-Split CIFAR-100, 20-Split CIFAR-100, and 25-Spilt TinyImageNet, respectively. Then the learning rate after the first task is set to $1\times10^{-4}$, $5\times 10^{-5}$, and $1\times10^{-4}$, respectively. After that, we delay the learning rate at epoch 30 and 60 by multiplying with 0.5. The total epoch is 80 for all benchmarks. The batch size of 10-Split CIFAR-100, 20-Split CIFAR-100, and 25-Spilt TinyImageNet are 32, 16, and 16, respectively. We set $k_0=0.9$ for all benchmarks. The $\alpha_{\text{max}}$ are 160, 180, and 20, and $\alpha_{\text{min}}$ are 150, 150, and 5 for 10-Split CIFAR-100, 20-Split CIFAR-100, and 25-Spilt TinyImageNet, respectively.

\subsection{Additional Experimental Results}
\label{LowRank:Appendix:Additional Experimental results}
\subsubsection{Comparison of Running Time}
In this part, we compare the running time of the proposed method with Adam-NSCL \cite{Wang_2021_CVPR}, the most related baseline, to validate that the time consumption of the proposed method is comparable. The device is a single Nvidia Tesla V100 (16GB) GPU. As shown in Table \ref{LowRankNullSpace:RunningTime}, the  running time of AdNS is comparable to Adam-NSCL on all benchmarks, validating that the time consumption of low-rank approximation is moderate. 
\begin{table}[t]
\centering
\caption{Comparison of running time (s). The device is a single Nvidia Tesla V100 (16GB) GPU. The results of 10-Split CIFAR-100 and 20-Split CIFAR-100 are the average running time over five repetitions, and the results of 25-Split TinyImageNet are over three repetitions.
}
\resizebox{\columnwidth}{!}{
\begin{tabular}{@{}c|cccc@{}}
\toprule
 Method  &  \textbf{10-S-CIFAR-100}              &\textbf{20-S-CIFAR-100}          &          \textbf{25-S-TinyImageNet}              \\

\midrule 
Adam-NSCL \cite{Wang_2021_CVPR}        &      5167 $\pm$ 5     &   23175 $\pm$ 416           &    32676 $\pm$ 1237    \\
AdNS (Ours)        &       5623 $\pm$ 13     &   26286 $\pm$ 501            &    33462 $\pm$ 172
 \\
\bottomrule
\end{tabular}
    }
\label{LowRankNullSpace:RunningTime}
\end{table}
\begin{figure*}[t]
    \centering
    {
    \setlength\tabcolsep{0pt}
    \resizebox{1.0\textwidth}{!}{
    \begin{tabular}{ccc}
        \includegraphics[width=1.0\textwidth]{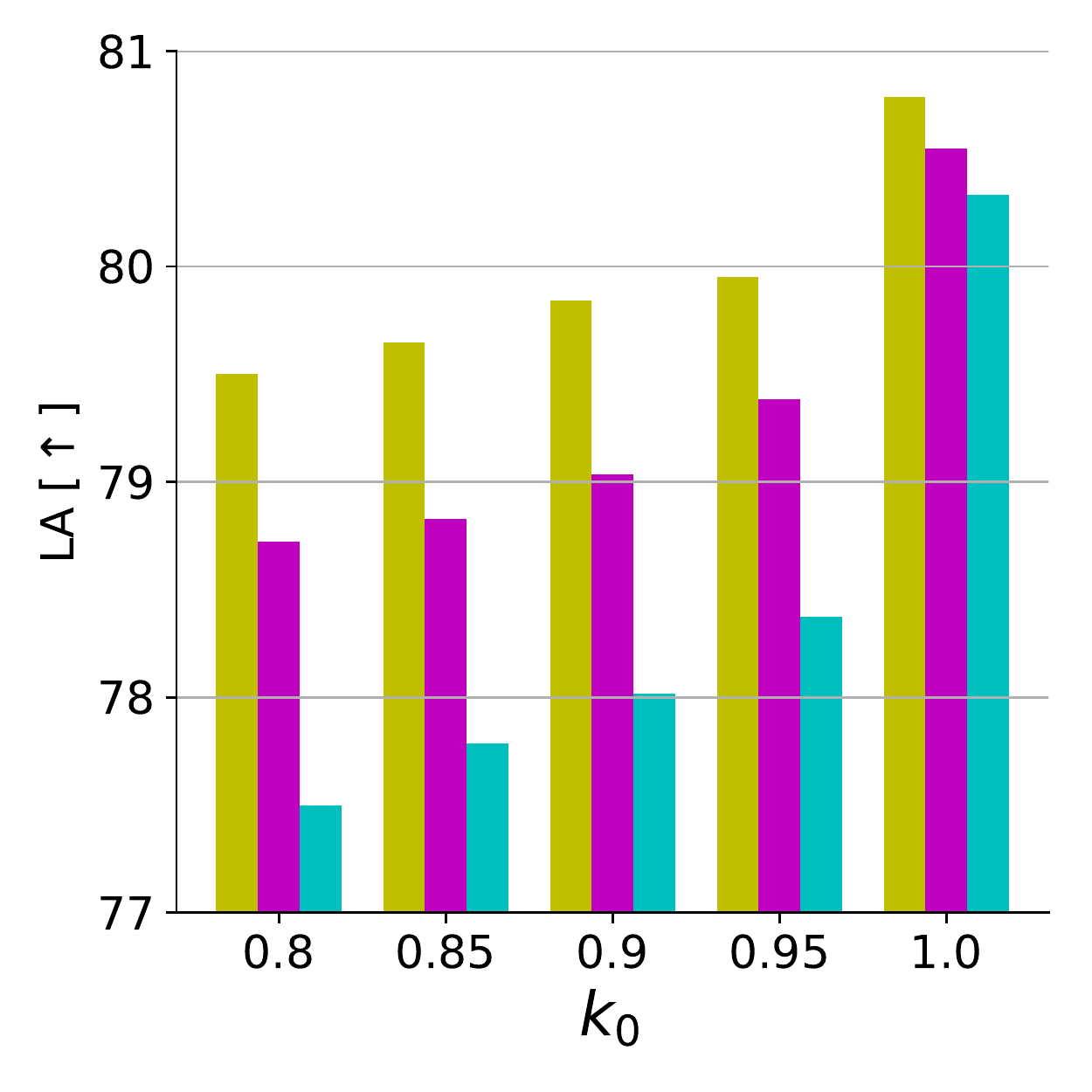}&
        \includegraphics[width=1.0\textwidth]{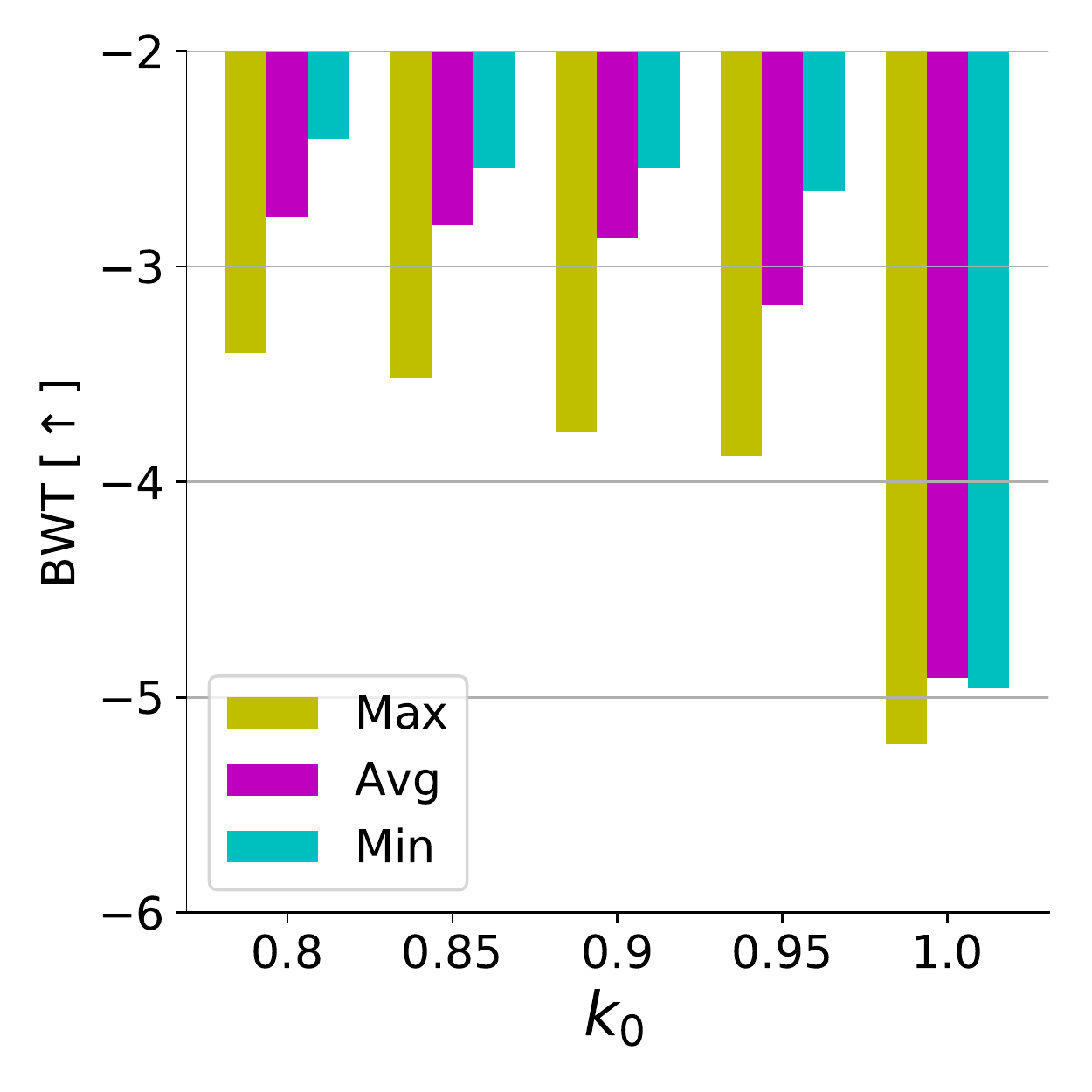} &
        \includegraphics[width=1.0\textwidth]{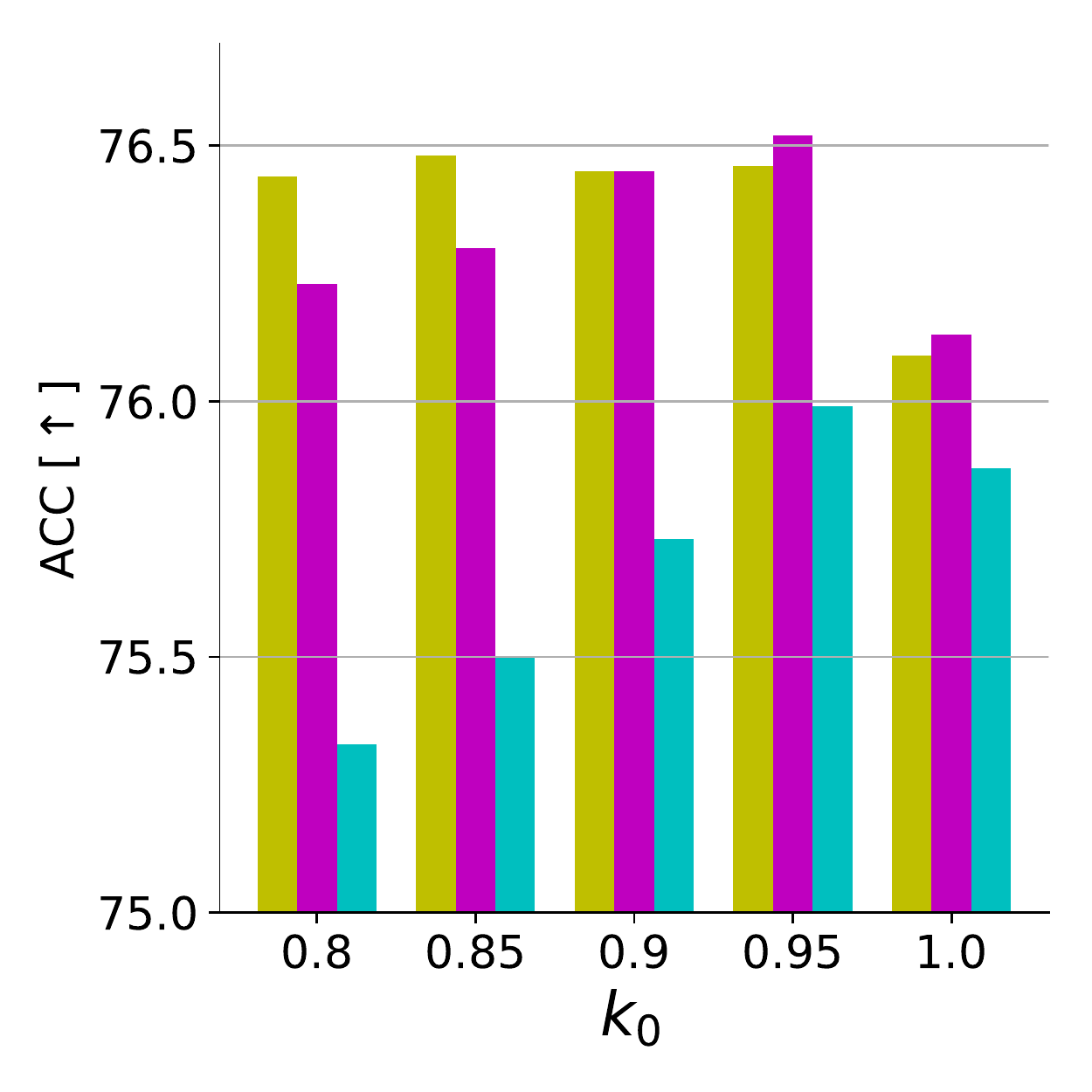} \\
    \end{tabular}
    }
    }
    \caption{The effect of $k$. Higher BWT indicates less forgetting (high stability) and higher LA indicates higher plasticity.
    The dataset is 10-Split CIFAR-100.
    [$\uparrow$] higher is better. }
    \label{LowRankNullSpace:k_0_pls_acc_bwt}
\end{figure*}

\subsubsection{Plasticity and Stability}
\noindent \textbf{The effect of $k$.}
Like previous works
\cite{riemer2018learning}, 
we use Learning Accuracy (LA), which is the accuracy of the model on a task
right after it finishes training the task, to measure the plasticity. Higher LA indicates better plasticity.
In formal, LA is defined by
 \begin{equation}
     \text{LA} = \frac{1}{T}\sum_{i=1}^{T}A_{i, i},
 \end{equation}
where $T$ is the total number of tasks, and $A_{i, i}$ is the accuracy of task $\mathcal{T}_i$ after training on the task $\mathcal{T}_i$ sequentially. The larger the LA, the better the plasticity of the model. 
As shown in Fig. \ref{LowRankNullSpace:k_0_pls_acc_bwt}, for all strategies, with the increase of $k_0$, the plasticity becomes better  while the forgetting becomes worse. 
The ACC first increases and then decreases due to the stability-plasticity trade-off. 
The results agree with the theoretical findings.
With the increase of $k_0$, $k$ becomes larger and the rank of the shared low-rank null space is larger.
According to Theorems \ref{Null_space:convergence_rate} and \ref{Null_space:forgetting}, it would result in better plasticity and worse forgetting. Finally, the performance of ACC will first increase and then decrease due to the stability-plasticity dilemma.

\noindent \textbf{The effect of $\alpha$.} Larger $\alpha$ indicates looser constraint in  (\ref{LowRank:constraint}), i.e., $\| X_{t-1}^l\Delta w^l \|_1 \leq \epsilon(t)$. 
As shown in Fig. \ref{LowRankNullSpace:a_pls_acc_bwt}, with the increase of $\alpha$, LA becomes higher while BWT becomes worse.
ACC first increases and then decreases as a result of the stability-plasticity dilemma.

\begin{figure*}[t]
    \centering
    {
    \setlength\tabcolsep{0pt}
    \resizebox{1.0\textwidth}{!}{
    \begin{tabular}{ccc}
        \includegraphics[width=1.0\textwidth]{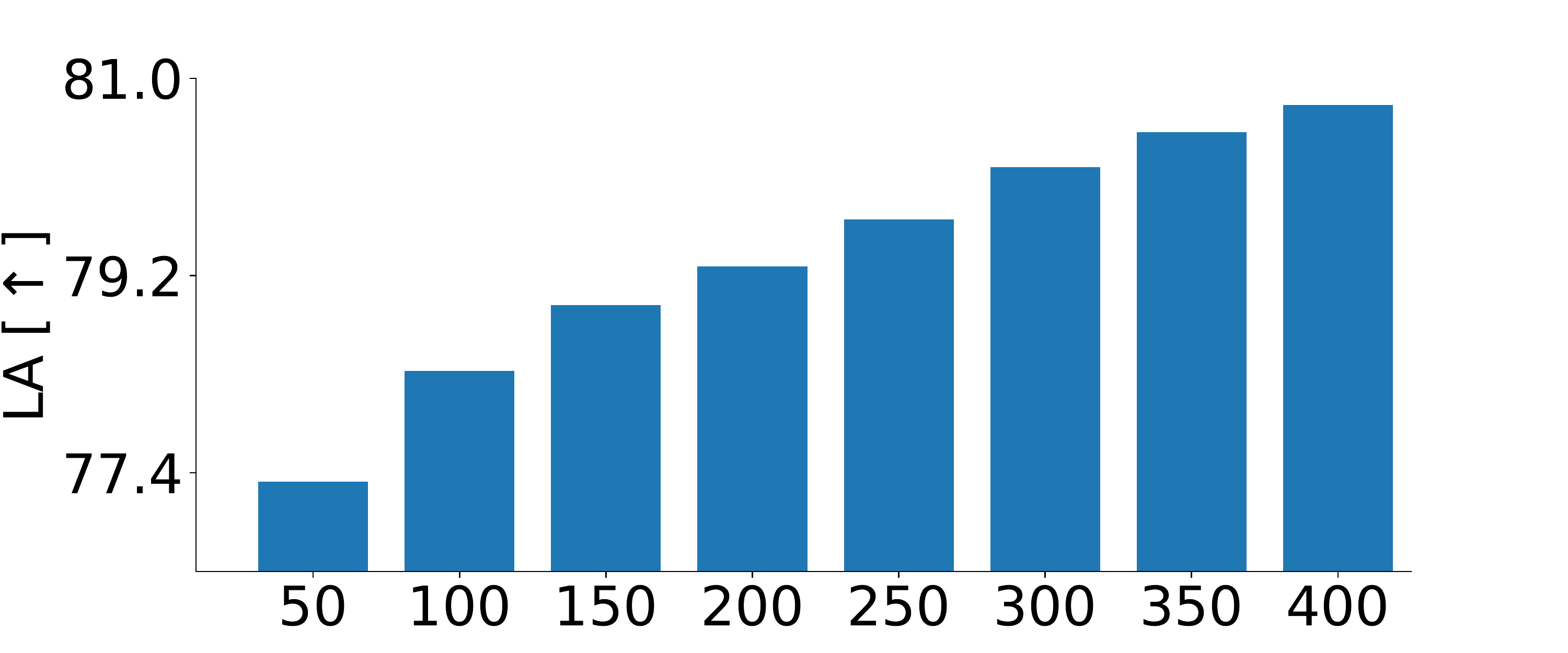}&
        \includegraphics[width=1.0\textwidth]{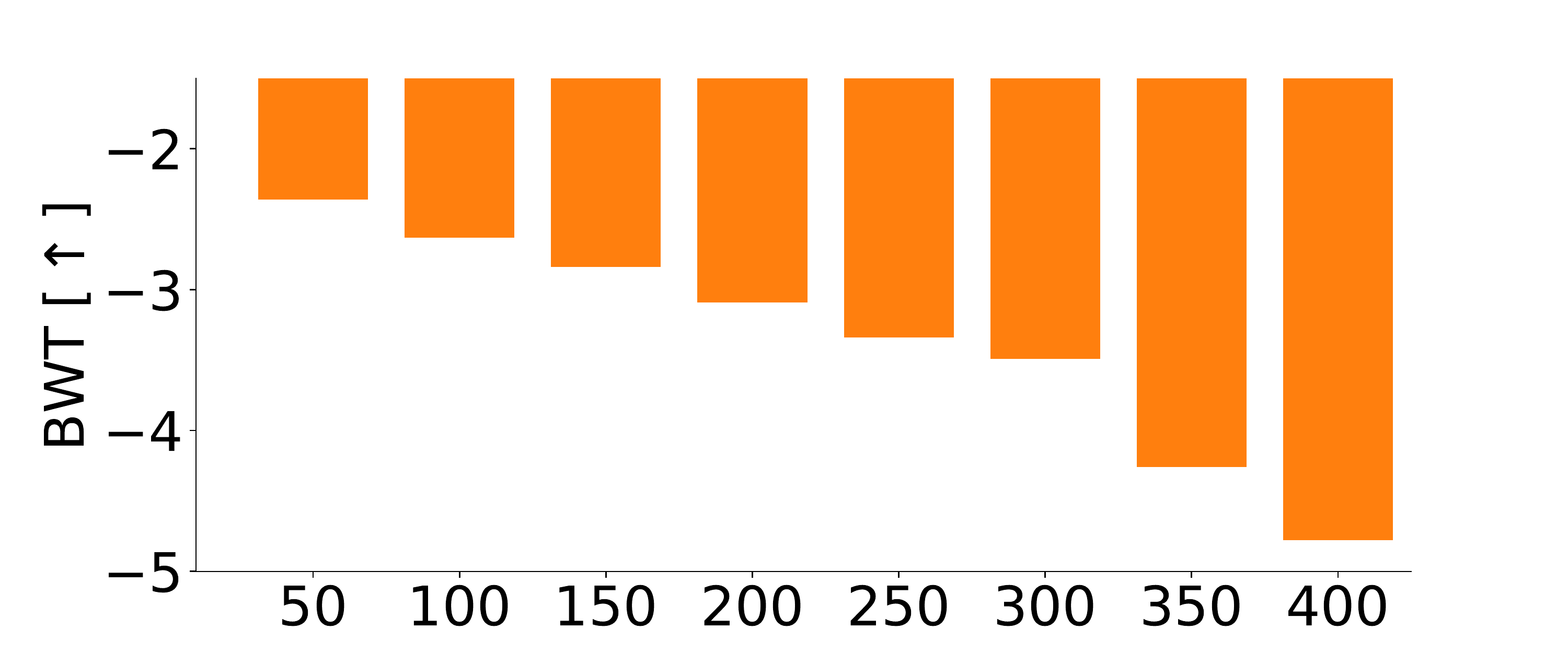} &
        \includegraphics[width=1.0\textwidth]{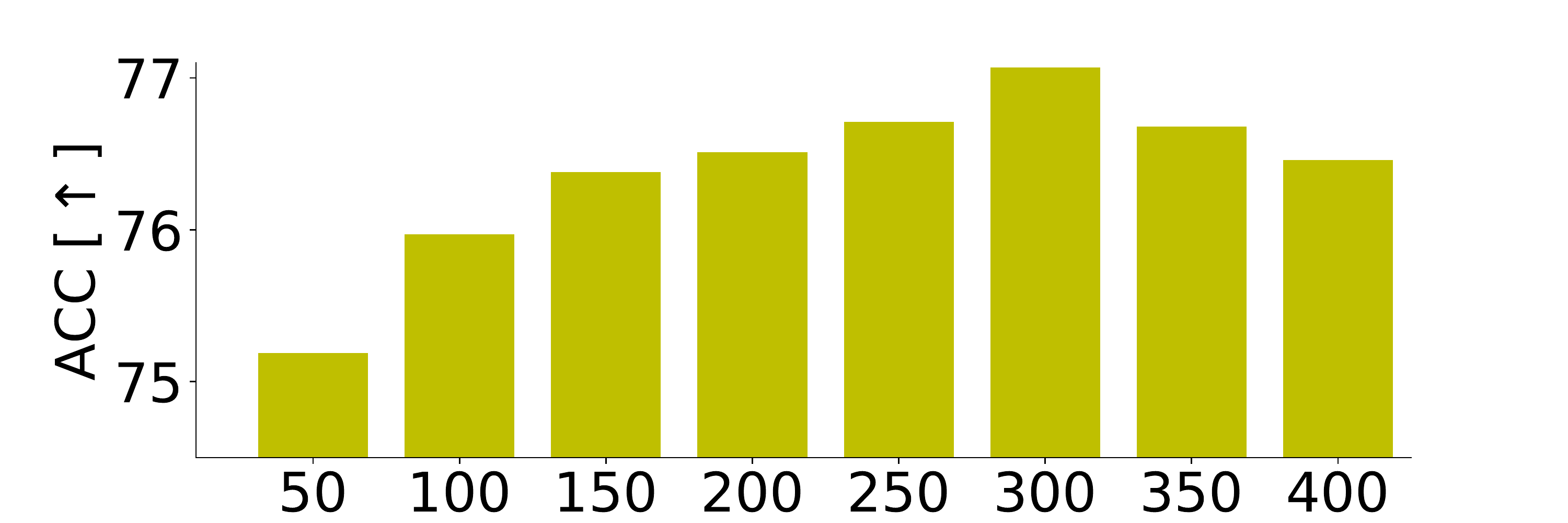} \\
    \end{tabular}
    }
    }
    \caption{The effect of $\alpha$. The x-axis is the value of $\alpha$, and the y-axes are different metrics. Higher BWT indicates less forgetting (high stability) and higher LA indicates higher plasticity.
    The dataset is 10-Split CIFAR-100.
    [$\uparrow$] higher is better. }
    \label{LowRankNullSpace:a_pls_acc_bwt}
\end{figure*}

\noindent \textbf{The plasticity of different values of $\beta$.}
Now we validate that intra-task distillation is beneficial to improve the performance of the current task. Fig. \ref{LowRank:beta_pls} shows that with proper $\beta$, the LA can be improved, validating the effectiveness of intra-task distillation on improving the performance of the current task.
 \begin{figure}[t]
\centering
   \includegraphics[width=0.5\linewidth]{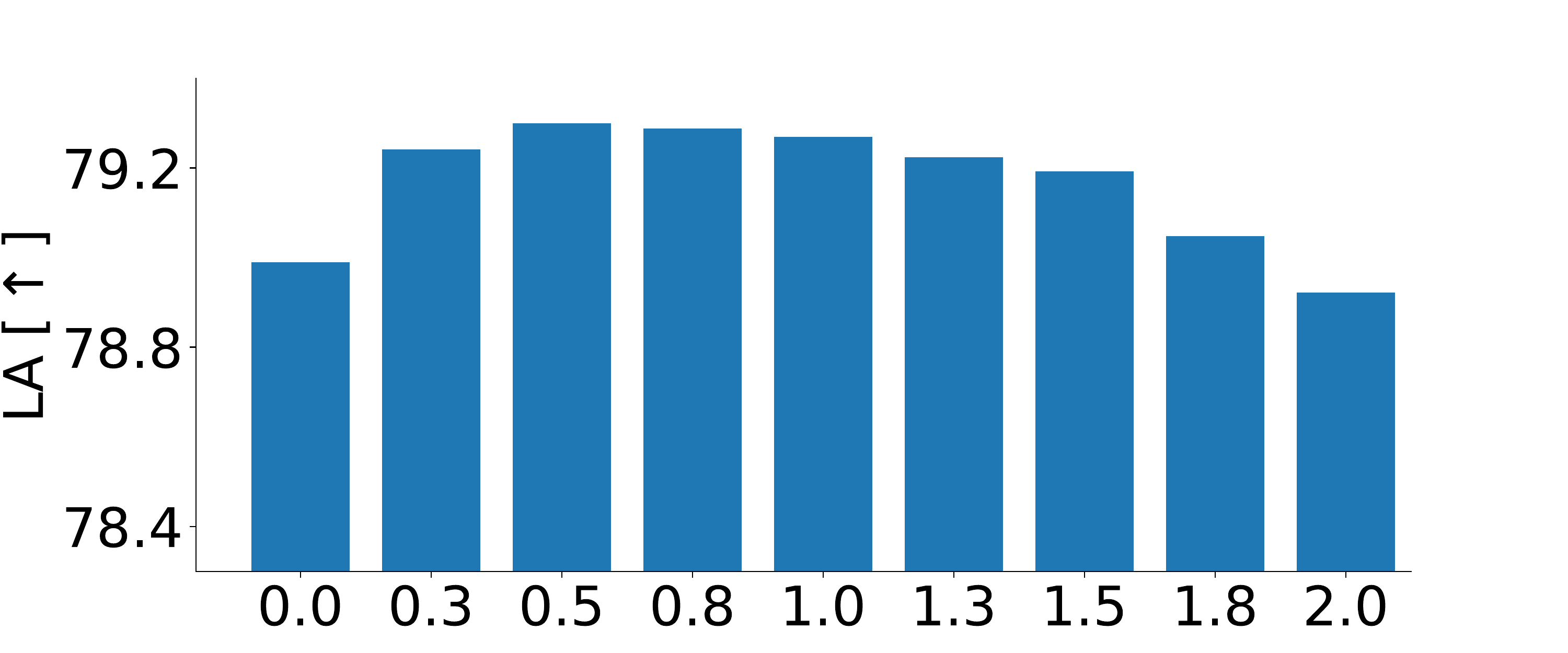}
   \caption{The plasticity of different values of $\beta$. The x-axis is the value of $\beta$, and the y-axis is Learning Accuracy (LA) . Higher LA indicates higher plasticity.
   }
\label{LowRank:beta_pls}
\end{figure}

\section{Theoretical Analysis}
\label{lowrank:appendix:theo}
In this part, we present the proof of Theorems \ref{Null_space:convergence_rate} and \ref{Null_space:forgetting}, respectively. 
\begin{lemma}
\label{LowRank:Lemma:smooth:upperbound-all}
    Assume that $f(x)$ is L-smooth, let learning rate be $\eta\leq \frac{1}{L}$ and the update be $x_{t+1}=x_t-\eta \nabla h(x_t)$, we have 
    \begin{align*}
    \frac{\eta}{2} \|\nabla f(x_t)\|^2_2
    \leq & 
    f(x_t)
    -\mathbb{E}[f(x_{t+1})] +\frac{L\eta^2}{2} \mathbb{E}\|\nabla h(x_t)-\mathbb{E}[\nabla h(x_t)]\|^2_2\\
    & +\frac{1}{2}\eta {\|\nabla f(x_t)-\mathbb{E}[\nabla h(x_t)]\|^2_2}.
    \end{align*}
\end{lemma}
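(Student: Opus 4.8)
The plan is to run the standard descent-lemma argument while carefully tracking the fact that the actual update direction $\nabla h(x_t)$ differs, in expectation, from $\nabla f(x_t)$; this discrepancy is exactly what generates the two error terms on the right-hand side. First I would invoke the $L$-smoothness of $f$ at the pair $(x_t, x_{t+1})$ and substitute the update $x_{t+1} - x_t = -\eta \nabla h(x_t)$, which gives
\begin{equation*}
f(x_{t+1}) \leq f(x_t) - \eta \langle \nabla f(x_t), \nabla h(x_t)\rangle + \tfrac{L\eta^2}{2}\|\nabla h(x_t)\|_2^2.
\end{equation*}
Taking expectation conditioned on $x_t$, so that the only randomness lives in the oracle $h$, replaces $\nabla h(x_t)$ by $\mathbb{E}[\nabla h(x_t)]$ in the linear term and leaves $\mathbb{E}\|\nabla h(x_t)\|_2^2$ in the quadratic term.

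Next I would reshape the two surviving terms. For the quadratic term I would apply the bias--variance split $\mathbb{E}\|\nabla h(x_t)\|_2^2 = \|\mathbb{E}[\nabla h(x_t)]\|_2^2 + \mathbb{E}\|\nabla h(x_t) - \mathbb{E}[\nabla h(x_t)]\|_2^2$, which isolates precisely the variance term appearing in the statement. For the inner product I would use the polarization identity $\langle a, b\rangle = \tfrac12\|a\|_2^2 + \tfrac12\|b\|_2^2 - \tfrac12\|a-b\|_2^2$ with $a = \nabla f(x_t)$ and $b = \mathbb{E}[\nabla h(x_t)]$; here the $\|a-b\|_2^2$ piece is exactly the bias term $\|\nabla f(x_t) - \mathbb{E}[\nabla h(x_t)]\|_2^2$ that the lemma requires, and it arrives with the correct coefficient $\eta/2$.

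Collecting everything, the coefficient multiplying $\|\mathbb{E}[\nabla h(x_t)]\|_2^2$ comes out to be $\tfrac{\eta}{2}(L\eta - 1)$. This is the single place where the hypothesis $\eta \leq 1/L$ does genuine work: it forces this coefficient to be nonpositive, so the term may be discarded without breaking the inequality. After dropping it and rearranging to solve for $\tfrac{\eta}{2}\|\nabla f(x_t)\|_2^2$, the claimed bound falls out directly.

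I do not anticipate a serious obstacle, since this is essentially the textbook stochastic descent lemma adapted to a \emph{biased} gradient oracle. The only points demanding care are the bookkeeping of the expectation—conditioning on $x_t$ and recognizing that randomness enters solely through $\nabla h(x_t)$—and the deliberate choice of the polarization identity rather than a Young-type inequality, so that the cross term reproduces the exact expression $\|\nabla f(x_t) - \mathbb{E}[\nabla h(x_t)]\|_2^2$ with coefficient $\eta/2$ instead of a looser constant.
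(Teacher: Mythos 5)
Your proof is correct and follows essentially the same route as the paper's: the descent lemma with the update substituted in, expectation, the bias--variance split $\mathbb{E}\|\nabla h(x_t)\|_2^2 = \|\mathbb{E}[\nabla h(x_t)]\|_2^2 + \mathbb{E}\|\nabla h(x_t)-\mathbb{E}[\nabla h(x_t)]\|_2^2$, the polarization identity $2\langle a,b\rangle = \|a\|_2^2+\|b\|_2^2-\|a-b\|_2^2$ for the cross term, and the condition $\eta\leq 1/L$ used precisely to discard the resulting $\frac{\eta}{2}(L\eta-1)\|\mathbb{E}[\nabla h(x_t)]\|_2^2$ term. Your explicit remark about conditioning on $x_t$ is a slight tightening of the paper's bookkeeping, but the argument is identical in substance.
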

\begin{proof}
Based on the definition of the smoothness of $f(x)$ and the update of $x_{t+1}=x_t-\eta \nabla h(x_t)$, we obtain,
 \begin{align*}
    f(x_{t+1})=&f(x_t-\eta \nabla h(x_t))
    \\
    \le & f(x_t)+\langle \nabla f(x_t),x_{t+1}-x_t \rangle +\frac{L}{2}\|x_{t+1}-x_t\|^2_2
    \\
    =& f(x_t)- \langle \nabla f(x_t), 
   {\color{black}\eta \nabla h (x_t) }
    \rangle +\frac{L}{2}\eta^2\|
    {\color{black}\nabla h(x_t)}
    \|^2_2.
\end{align*}
Taking expectation on both sides, we
\allowdisplaybreaks[4]
{\small
\begin{align*}
    \mathbb{E}[f(x_{t+1})]
    \le & f(x_t)- \mathbb{E}\langle \nabla f(x_t), 
   {\color{black}\eta \nabla h (x_t) }
    \rangle +\frac{L}{2}\eta^2\mathbb{E}\|
    {\color{black}\nabla h(x_t)}
    \|^2_2
    \\
    =&f(x_t)- \eta \langle \nabla f(x_t), {\color{black}\mathbb{E}[\nabla h(x_t)]}
    \rangle +\frac{L}{2}\eta^2\mathbb{E}\|\nabla {h}(x_t)
    {\color{black} 
    -\mathbb{E}[\nabla h(x_t)]
    +\mathbb{E}[\nabla h(x_t)]
    }
    \|^2_2
    \\
    =&f(x_t)
    -{\color{black}
    \frac{1}{2}\eta\left(
     \|\nabla f(x_t)\|^2_2
     +\|\mathbb{E}[\nabla h(x_t)]\|^2_2
    -\|\nabla f(x_t)-\mathbb{E}[\nabla h(x_t)]\|^2_2
    \right)
    }
    \\
    +\frac{L\eta^2}{2}&\left(
    {\color{black}
    \|\mathbb{E}[\nabla h(x_t)]\|^2_2
    \!+\!\mathbb{E}\|\nabla h(x_t)\!-\!\mathbb{E}[\nabla h(x_t)]\|^2_2
    \!+\!2\mathbb{E}\langle \mathbb{E}[\nabla h(x_t)],\nabla h(x_t)\!-\!\mathbb{E}[\nabla h(x_t)]\rangle
    }
    \right)
        \\
    =&f(x_t)
    -\frac{1}{2}\eta\left(
     \|\nabla f(x_t)\|^2_2
     +\|\mathbb{E}[\nabla h(x_t)]\|^2_2
    -\|\nabla f(x_t)-\mathbb{E}[\nabla h(x_t)]\|^2_2
    \right)
    \\
    &+\frac{L}{2}\eta^2\left(
    {\color{black}
    \|\mathbb{E}[\nabla h(x_t)]\|^2_2
    +\mathbb{E}\|\nabla h(x_t)-\mathbb{E}[\nabla h(x_t)]\|^2_2
    }
    \right)
    \\ 
    =&f(x_t)
    -\frac{1}{2}\eta \|\nabla f(x_t)\|^2_2
    -\left(\frac{1}{2}\eta-\frac{1}{2}L\eta^2\right)\|\mathbb{E}[\nabla h(x_t)]\|^2_2
    \\
    &+\frac{1}{2}L\eta^2 \mathbb{E}\|\nabla h(x_t)-\mathbb{E}[\nabla h(x_t)]\|^2_2
    \\
    &+\frac{1}{2}\eta {\|\nabla f(x_t)-\mathbb{E}[\nabla h(x_t)]\|^2_2},
\end{align*}
}
where the second equation is based on the fact that $a^2+b^2-(a-b)^2=2ab$.
By arranging, we have
\begin{align*}
    \frac{1}{2}\eta \|\nabla f(x_t)\|^2_2
    +\left(\frac{1}{2}\eta-\frac{1}{2}L\eta^2\right)\|\mathbb{E}[\nabla h(x_t)]\|^2_2
    \leq &
    f(x_t)
    -\mathbb{E}[f(x_{t+1})] 
    \\
    &+\frac{1}{2}L\eta^2 \mathbb{E}\|\nabla h(x_t)-\mathbb{E}[\nabla h(x_t)]\|^2_2\\
    &+\frac{1}{2}\eta {\|\nabla f(x_t)-\mathbb{E}[\nabla h(x_t)]\|^2_2}.
\end{align*}
If $\eta\leq \frac{1}{L}$, we have 
\begin{align*}
    \frac{1}{2}\eta \|\nabla f(x_t)\|^2_2
    \leq&
    f(x_t)
    -\mathbb{E}[f(x_{t+1})]
    +\frac{1}{2}L\eta^2 \mathbb{E}\|\nabla h(x_t)-\mathbb{E}[\nabla h(x_t)]\|^2_2\\
    &+\frac{1}{2}\eta {\|\nabla f(x_t)-\mathbb{E}[\nabla h(x_t)]\|^2_2}.
\end{align*}
\end{proof}

\begin{theorem2} \textbf{\ref{Null_space:convergence_rate}} (Plasticity) Suppose Assumptions \ref{LowRank:Assumption:sigmoid-distance}, \ref{LowRank:Assumption:sameT}, and  \ref{LowRank:Assumption:sigma} hold. Let $\mathbf{w}_{t, s}$ be the parameters on task $\mathcal{T}_t$ at the $s$-th step. Let the range of space of $\mathbf{U}^l$ be the null space of previous tasks for $l$-th layer,
then the loss of the current task $\mathcal{T}_{t}$ after training on task $\mathcal{T}_t$ is upper bound by
{\small
\begin{align*}
    \hat L_t(\mathbf{w}_{t,S}) 
    \!\leq\! \hat L_t(\mathbf{w}_{t, 0})
    &\!+\!\frac{\eta}{2}\sum_{s=0}^{S-1}\sum_{l = 1}^{L}{\|(I\!-\!
    \mathbf{U}^l(\mathbf{U}^l)^{\top})g_{t, s}^{l}\|_2^2} \nonumber \!-\!\frac{1}{2}\eta\sum_{s=0}^{S-1}\|\nabla \hat L_t(\mathbf{w}_{t,s})\|_2^2 
    \!+\! \frac{1}{2}S L_f \eta^2\sigma^2,
\end{align*}
}
where $g_{t, s}^{l}$ is $l$-th layer gradient of  $ \hat L_t(\mathbf{w}_{t, s})$.
\end{theorem2}

\begin{proof}
When training on the task $\mathcal{T}_t$, based on Lemma \ref{LowRank:Lemma:smooth:upperbound-all}, if $\eta \leq \frac{1}{L_f}$ we have 
{\small
\begin{align}
    \hat L_t(\mathbf{w}_{t, s+1}) \leq & \hat L_t(\mathbf{w}_{t, s}) + \frac{1}{2}L_f\eta^2\sigma^2 - \frac{1}{2}\eta\|\nabla \hat L_t(\mathbf{w}_{t, s})\|^2_2 \\
    & + \frac{1}{2}\eta \|\nabla \hat L_t(\mathbf{w}_{t, s})-   \begin{bmatrix}
	 (\mathbf{U}^1(\mathbf{U}^1)^{\top}(g_{t, s}^{1})^{\top}  &  ... &  (\mathbf{U}^L(\mathbf{U}^L)^{\top}(g_{t, s}^{L})^{\top}
    \end{bmatrix}^{\top}\|^2_2\\
    = & \hat L_t(\mathbf{w}_{t, s}) + \frac{1}{2}L_f\eta^2\sigma^2 - \frac{1}{2}\eta\|\nabla \hat L_t(\mathbf{w}_{t, s})\|^2_2  + \frac{1}{2}\eta \sum_{l = 1}^{L}{\|(I-  \mathbf{U}^l(\mathbf{U}^l)^{\top})g_{t, s}^{l}\|^2_2}. 
\end{align}
}
Summing from $s=0$ to $s=S - 1$, we obtain
{\small
    \begin{align}
    \hat L_t(\mathbf{w}_{t,S}) 
    \!\leq\! \hat L_t(\mathbf{w}_{t, 0})
    &\!+\!\frac{\eta}{2}\sum_{s=0}^{S-1}\sum_{l = 1}^{L}{\|(I\!-\!
    \mathbf{U}^l(\mathbf{U}^l)^{\top})g_{t, s}^{l}\|_2^2} \nonumber \!-\!\frac{1}{2}\eta\sum_{s=0}^{S-1}\|\nabla \hat L_t(\mathbf{w}_{t,s})\|_2^2 
    \!+\! \frac{1}{2}S L_f \eta^2\sigma^2.
    \end{align}
}
\end{proof}
\begin{theorem2} \textbf{\ref{Null_space:forgetting}} (Stability) Suppose Assumptions \ref{LowRank:Assumption:sigmoid-distance} and \ref{LowRank:Assumption:sameT} hold. Let $\mathbf{w}_{t, s}$ be the parameters on task $\mathcal{T}_t$ at the $s$-th. Let $\hat L_{1:t-1}$ be the sum of empirical loss function of previous $t-1$ tasks and $g_{1:t-1, s}^{l}$ is its gradient of $l$-th layer at $\mathbf{w}_{t, s}$. 
Let
$g_{t, s}^{l}$ be the gradient of the current task at $\mathbf{w}_{t, s}$ of $l$-th layer. 
Let the range of space of $\mathbf{U}^l$ be the null space of previous tasks for $l$-th layer, then the forgetting of previous $t-1$ tasks generated by the training on the task $\mathcal{T}_t$ is upper bound by
\begin{align*}
    \hat L_{1:t-1}(\mathbf{w}_{t, S})   -  \hat L_{1:t-1}(\mathbf{w}_{t, 0}) 
    \leq  &
    \eta \sum_{s=0}^{S-1}\sum_{l=1}^{L} \|\mathbf{U}^l(\mathbf{U}^l)^{\top}\|_2 \|g_{t, s}^l\|_2 \| g_{1:t-1, s}^{l}\|_2 
    \\
    &+ \frac{L_f}{2}\eta^2\sum_{s=0}^{S-1}\sum_{l=1}^{L}\|\mathbf{U}^l(\mathbf{U}^l)^{\top}\|_2^2\|g_{t, s}^l\|_2^2.
\end{align*}
\end{theorem2}
\begin{proof}
\textcolor{black}{Because the distributions between tasks are different, we could assume that $\left < \Delta \mathbf{w}_{t, s}, g_{1:t-1, s} \right > \leq 0$. }
Based on the smoothness of $\hat L_{1:t-1}$, we obtain

\begin{align*}
    \hat L_{1:t-1}(\mathbf{w}_{t, s+1}) 
    & \leq 
    \hat L_{1:t-1}(\mathbf{w}_{t, s}) - \eta \left <\Delta \mathbf{w}_{t, s}, \mathbf{g}_{1:t-1, s} \right >+ \frac{L_f}{2}\sum_{l=1}^{L}\|\eta \Delta \mathbf{w}_{t, s}\|^2_2 \\
    & = 
    \hat L_{1:t-1}(\mathbf{w}_{t, s}) - \eta \left <\Delta \mathbf{w}_{t, s}, \mathbf{g}_{1:t-1, s} \right >+ \frac{L_f}{2}\eta^2\sum_{l=1}^{L}\|\mathbf{U}^l(\mathbf{U}^l)^{\top}g_{t, s}^l\|^2_2 \\
    & \leq 
    \hat L_{1:t-1}(\mathbf{w}_{t, s}) - \eta \left <\Delta \mathbf{w}_{t, s}, \mathbf{g}_{1:t-1, s} \right >+ \frac{L_f}{2}\eta^2\sum_{l=1}^{L}\|\mathbf{U}^l(\mathbf{U}^l)^{\top}\|^2_2\|g_{t, s}^l\|^2_2,
\end{align*}
where $\mathbf{w}_{t, s+1} = \mathbf{w}_{t, s} - \eta \Delta \mathbf{w}_{t, s}$, $\mathbf{g}_{1:t-1, s}$ is the gradient of $\hat L_{1:t-1}$, and
$g_{1:t-1, s}^{l}$ is the gradient of $\hat L_{1:t-1}$ at $l$-th layer.

Because of
\begin{align*}
   \left< \Delta \mathbf{w}_{t, s}, \mathbf{g}_{1:t-1, s} \right> & = 
     \textcolor{black}{\sum_{l=1}^{L} \left<\mathbf{U}^l(\mathbf{U}^l)^{\top}g_{t, s}^l,  g_{1:t-1, s}^{l}  \right> \leq \sum_{l=1}^{L} \|\mathbf{U}^l(\mathbf{U}^l)^{\top}\|_2 \|g_{t, s}^l\|_2 \| g_{1:t-1, s}^{l}\|_2,}
\end{align*}
we have
\begin{align*}
    \hat L_{1:t-1}(\mathbf{w}_{t, s+1}) \leq & \hat L_{1:t-1}(\mathbf{w}_{t, s}) + \eta \sum_{l=1}^{L} \|\mathbf{U}^l(\mathbf{U}^l)^{\top}\|_2 \|g_{t, s}^l\|_2 \| g_{1:t-1, s}^{l}\|_2 
    \\
    &+ \frac{L_f}{2}\eta^2\sum_{l=1}^{L}\|\mathbf{U}^l(\mathbf{U}^l)^{\top}\|^2_2\|g_{t, s}^l\|^2_2.
\end{align*}
\textcolor{black}{Summing from $s=0$ to $s=S - 1$, we obtain}
\begin{align*}
 \hat L_{1:t-1}(\mathbf{w}_{t, S})   -  \hat L_{1:t-1}(\mathbf{w}_{t, 0}) 
    \leq  &
    \eta \sum_{s=0}^{S-1}\sum_{l=1}^{L} \|\mathbf{U}^l(\mathbf{U}^l)^{\top}\|_2 \|g_{t, s}^l\|_2 \| g_{1:t-1, s}^{l}\|_2 
    \\
    &+ \frac{L_f}{2}\eta^2\sum_{s=0}^{S-1}\sum_{l=1}^{L}\|\mathbf{U}^l(\mathbf{U}^l)^{\top}\|_2^2\|g_{t, s}^l\|_2^2.
\end{align*}
\end{proof}

\end{document}